\documentclass{article}
\usepackage{arxiv}
\renewcommand{\shorttitle}{Approximate Universal Prediction under Interaction}
\renewcommand{\headeright}{}
\renewcommand{\undertitle}{}
\usepackage[T1]{fontenc}
\usepackage[utf8]{inputenc}
\usepackage{microtype}
\usepackage{amsmath,amssymb,amsthm,mathtools}
\usepackage{booktabs,tabularx,xcolor,enumitem,natbib}
\usepackage[colorlinks=true,allcolors=blue!45!black]{hyperref}
\newtheorem{theorem}{Theorem}\newtheorem{proposition}{Proposition}\newtheorem{lemma}{Lemma}\newtheorem{corollary}{Corollary}\newtheorem{example}{Example}\newtheorem{remark}{Remark}
\newcommand{\KL}{D_{\mathrm{KL}}}\newcommand{\TV}{\mathrm{TV}}\newcommand{\E}{\mathbb{E}}\newcommand{\cM}{\mathcal{M}}\newcommand{\Ber}{\operatorname{Bern}}\newcommand{\norm}[1]{\left\lVert#1\right\rVert_2}
\title{Approximate Universal Prediction under Interaction}
\author{Richard Cornelius Suwandi\\School of Artificial Intelligence\\
The Chinese University of Hong Kong, Shenzhen\\
\texttt{richardsuwandi@link.cuhk.edu.cn}}
\date{}
\begin{document}
\maketitle

\begin{abstract}
Bayesian mixtures, and Solomonoff induction in particular, provide an idealized model of prediction learned by minimizing population log loss, and they underlie universal agents such as AIXI. An agent that plans with an approximation of such a predictor decides where that predictor is evaluated, and a planner can be drawn toward branches on which its model is optimistic. We study which approximation guarantees survive this feedback. For normalized finite-horizon causal mixtures, the transfer coefficient factors into a policy likelihood ratio and task evidence available before each prediction, so task rarity and action-branch rarity can compound. Within one fixed universal mixture, infinitely many computable environments admit excess prediction error of order $\kappa\eta\,2^{K(\mu)}$, where $\eta$ is the training KL budget and $\kappa$ measures policy coverage. This order is sharp, and for each target a computable predictor attains it at a suitable finite horizon, so computability alone does not remove the penalty. For bounded returns, value error is controlled by the probability of entering a poorly covered branch plus a square-root term in $\eta$. Both terms are necessary, and finite maximum coverage ratios are not required. Motivated by the mentor override of Golden Handcuffs, we derive margins for preserving deferral decisions and conditional KL budgets with an unavoidable inverse-history-probability penalty, and a counterexample separates agreement of deferral decisions from task-wise event avoidance. The guarantees apply to any predictor architecture that meets explicit population approximation and coverage conditions, which gives concrete targets for implementing idealized universal agents.
\end{abstract}
\keywords{universal prediction \and Solomonoff induction \and AIXI \and agent foundations \and world models \and distribution shift \and mentor deferral \and AI safety}

\section{Introduction}
An agent that plans with a learned model of its environment decides which situations that model will face next. In the Bayesian ideal, exemplified by AIXI and its finite-horizon relatives, the model is a mixture over candidate environments, and its prediction error in the true environment is controlled by that environment's prior mass \citep{hutter2003}. We study a learned predictor that approximates such a mixture. Its residual error need not be spread evenly. It can sit on environments that the prior treats as rare, or on action branches that the data-collection process seldom visits, and a planner that optimizes against the learned model is drawn toward exactly the branches where its predictions are optimistic. This paper asks which quantities govern the transfer of a predictive approximation guarantee to the decisions of the agent that relies on it.

The question has a direct alignment motivation. Several proposals for safe agents are defined through ideal Bayesian values. Golden Handcuffs, for example, uses pessimistic predictions and a value-triggered mentor override to constrain an agent \citep{ebtekarcohen2026}. Any implementation replaces the ideal predictor with an approximation, which can change both the actions the agent favors and the moments at which it yields control. Understanding where approximation error goes under interaction is therefore a precondition for carrying such guarantees over to implementable agents.

\paragraph{From passive to interactive robustness.}
Memory-based meta-learning can amortize Bayesian prediction into a network whose parameters stay fixed while its predictions adapt to context \citep{ortega2019,graumoya2024}. \citet{liteng2026} analyze the robustness of the resulting approximate mixture predictors for normalized mixtures of sequence measures. Their Theorems~1 and~2 show that a trajectory-KL approximation bound $\eta$ gives, for each environment of prior mass $w_\mu$,
\[
S(\mu,\rho)=O\!\left(\log_2\frac1{w_\mu}+\frac{\eta}{w_\mu}\right),
\]
which for a normalized universal mixture becomes $O(K(\mu)+\eta 2^{K(\mu)})$. Their Theorem~3 shows that this order is attained by infinitely many deterministic environments within one fixed universal mixture, using a prefix that is rare under the mixture but certain under the target environment \citep[Lemma~2 and Appendix~C]{liteng2026}. Their Proposition~1 and Theorem~4 give an exact average-case decomposition with excess $(\ln2)\eta$, and their Theorem~5 extends it to dominated shifts of the prior.

Interaction adds a second source of rarity. An environment can be common while its deployment action branch is rarely explored during training. The two sources need not multiply at every prediction. What matters is the evidence for the task available before the prediction being evaluated. Revelation after a rare action can still amplify errors in later predictions. Section~\ref{sec:evidence} distinguishes these cases.

Even after deployment prediction is controlled, a small change in predicted returns can reverse a decision near a tie. A deployment branch absent from training also makes the corresponding maximum occupancy ratio infinite. We therefore analyze bounded-return planning through the probability of reaching poorly covered branches.

\paragraph{Contributions.}
We work with finite-horizon normalized causal Bayesian mixtures, a learned causal model $\rho$, and a population assumption $\KL(P_\xi^\beta\Vert P_\rho^\beta)\le\eta$ under a behavior policy $\beta$. The main results are the following.
\begin{enumerate}[leftmargin=*,itemsep=2pt]
\item \emph{Joint rarity, characterized at both extremes} (Section~\ref{sec:joint}). Target-task prediction error is controlled by a joint occupancy coefficient $\Gamma(\mu,\pi;\beta)$ (Theorem~\ref{thm:joint}). This coefficient factors exactly into a policy likelihood ratio and the evidence for $\mu$ in the history (Proposition~\ref{prop:evidence}), so task rarity is paid only through the observations that reveal the task. If all environments remain observationally indistinguishable before every evaluated prediction, $\Gamma\le\kappa_\pi$ independently of the task's prior mass (Corollary~\ref{cor:extremes}). In the other direction, within one fixed universal causal mixture and for one fixed pair of simple policies, infinitely many computable environments can incur excess prediction error of order $\kappa_\pi\eta\,2^{K(\mu)}$, matching the upper bound up to constants (Theorem~\ref{thm:universal}). This extends the passive lower bound of \citet{liteng2026} to interaction. A finite-table approximation preserves its order for computable predictors selected separately for each target and horizon (Corollary~\ref{cor:computablelower}).
\item \emph{Planning with coverage failures} (Section~\ref{sec:planning}). For bounded returns, the value error of any policy is at most the probability $\delta_\pi(c)$ of entering a branch whose occupancy ratio exceeds $c$, plus $\sqrt{(\ln2)c\eta/2}$ (Theorem~\ref{thm:rarecoverage}). Neither term can be removed (Proposition~\ref{prop:tailnecessary}), and the bound can give nontrivial plug-in planning guarantees when every maximum occupancy ratio is infinite. Counterexamples show that on-policy accuracy cannot control comparator values, that the square-root rate is unavoidable near ties, that a simply described action sequence can be exponentially rare under behavior, and that Bayesian value stability does not imply task-wise policy stability.
\item \emph{Approximate mentor deferral} (Section~\ref{sec:deferral}). We give a margin for preserving a finite-horizon deferral decision and a conservative threshold that never skips an ideal deferral. Proposition~\ref{prop:continuation} localizes the training KL to observed histories, including a prefix-free collection selected by a stopping rule. We show that the resulting inverse-probability cost of conditioning is unavoidable. An explicit counterexample separates agreement of override decisions from task-wise event avoidance.
\end{enumerate}
Appendix~\ref{app:additional} collects supporting results on discarded context, average-case prediction, simultaneous prior and policy shift, and episode-level data allocation. Appendix~\ref{app:computability} records a computability obstruction for one fixed computable predictor.

The upper bounds combine classical tools, namely the KL chain rule, change of measure, conditional-mean identities, and continuity of bounded returns. The novelty lies in identifying the right interactive quantities, characterizing when they compound, and proving that the compounding is unavoidable inside a universal mixture. The guarantees are stated for population probability laws under explicit approximation, coverage, and planning assumptions. They therefore apply to any learning procedure and architecture that meets those assumptions, and they give precise targets for such procedures. Section~\ref{sec:implications} records what the same statements imply for evaluation and deferral inside this model.

\subsection{Related work}
\paragraph{Universal prediction and computable approximations.}
\citet{hutter2003} establishes prediction and general decision-loss guarantees for Bayesian mixtures and Solomonoff prediction. Computable alternatives address a different part of the problem. \citet{filan2016} analyze speed priors and derive prediction-loss guarantees and computational bounds under assumptions on the generating measure. \citet{limdl2026} study additive and multiplicative approximation of a regularized minimum-description-length objective for sequential model selection. Their criterion constrains model selection at each observed prefix, which is a different notion of approximation from our population KL to a reference mixture under a behavior policy. Neural approaches instead amortize prediction from a task distribution \citep{ortega2019,graumoya2024,bloem2025}, and \citet{genewein2026} review the connection between log-loss training, compression, and universal prediction. \citet{young2024} explore the hypothesis that transformers approximate Solomonoff induction, and \citet{wanmei2025} propose an algorithmic-information reading of language-model training. These perspectives motivate our reference model but do not certify the population KL assumption for any deployed predictor. Our question is what follows once such a guarantee is available.

\paragraph{Approximate universal agents.}
Combining approximate universal prediction with planning has a substantial history. \citet{veness2011} introduce MC-AIXI, which combines a context-tree mixture with Monte Carlo planning, and their Theorems~1 and~2 relate model approximation to prediction and value estimates. Our setting instead starts from the KL error of an arbitrary learned model under a specified behavior policy and tracks its transfer to a selected task and deployment policy, which introduces task-mixture rarity and behavior-to-deployment occupancy as separate quantities. The gap between accurate on-policy prediction and successful control is also well known. \citet{leikehutter2015} exhibit failures induced by universal priors, and \citet{leike2016} give guarantees for an exploratory Thompson-sampling agent. \citet{wyethhutter2025embedded} show that extracting response predictions from a universal distribution over joint action-percept histories need not preserve the guarantees of a universal causal environment mixture. This matters when interpreting a model trained on whole transcripts, because modeling actions and responses jointly does not by itself establish our causal approximation assumption. \citet{meulemans2026} study embedded universal agents that predict their own actions. Our setup keeps the policy separate from the response model and does not address these self-prediction problems.

\paragraph{Coverage, model error, and planning.}
\citet{uehara2022} analyze pessimistic model-based offline reinforcement learning under realizability and partial coverage, and their guarantees can compare against a covered policy without requiring every candidate policy to be covered. Our plug-in planning bounds require control of both the comparator and the selected policy, either through finite occupancy ratios or through explicit coverage-failure probabilities. Value continuity under trajectory distance already appears in \citet[Lemma~2]{leike2016}, and \citet{lobel2024} sharpen simulation bounds for Markov models. Bayesian offline approaches provide complementary algorithmic context \citep{fellows2026,ni2026}. Model-based agents that plan with learned sequence models include the Trajectory Transformer \citep{janner2021} and Reasoning via Planning \citep{hao2023}. The present guarantees are deterministic consequences of a population KL assumption, rather than finite-sample rates or calibrated uncertainty estimates, and our gap corollary differs from the statistical fast-rate analysis of \citet{hukallus2021}.

\paragraph{Mentorship, uncertainty, and safety.}
\citet{ebtekarcohen2026} introduce Golden Handcuffs, a pessimistic universal Bayesian agent with mentor-guided exploration and a value-based override. They prove capability and event-avoidance guarantees under their assumptions and identify computable approximations as future work. Our analysis quantifies how model error affects a finite-horizon deferral decision and when conditional guarantees remain available at observed histories, and Section~\ref{sec:deferral} identifies what remains for an approximate version of their guarantees. A separate modeling issue concerns missing probability mass. \citet{wyethhutter2025ignorance} distinguish termination and ignorance interpretations of semimeasure loss and develop the corresponding value functions. We use normalized trajectory laws and a shared bounded return, so extending our results to semimeasures requires choosing between those decision models.

\section{Finite-horizon causal setup}\label{sec:setup}
Fix an integer horizon $H\ge1$ and finite nonempty action and percept alphabets $\mathcal A$ and $\mathcal E$. At time $t$, the agent chooses $a_t\in\mathcal A$ and then receives $e_t\in\mathcal E$. Write $h_{<t}=a_1e_1\cdots a_{t-1}e_{t-1}$ and $\tau=(a_{1:H},e_{1:H})$. A causal model $M$ assigns a normalized probability vector $M(\cdot\mid h,a)$ at every history-action pair. A policy $\pi$ assigns a probability vector $\pi(\cdot\mid h)$ and does not observe the hidden task label. Its trajectory law is
\begin{equation}
P_M^\pi(\tau)=\prod_{t=1}^H\pi(a_t\mid h_{<t})M(e_t\mid h_{<t},a_t).\label{eq:traj}
\end{equation}
Models and policies are defined at all histories, including those to which they assign probability zero.

Let $\cM$ be a countable class of such environments, with $w_\mu>0$ and $\sum_\mu w_\mu=1$. Define the environment likelihood of a history by
\[
L_\mu(h_{<t})=\prod_{s<t}\mu(e_s\mid h_{<s},a_s),
\qquad Z(h)=\sum_\nu w_\nu L_\nu(h).
\]
For $Z(h)>0$, the causal mixture is
\begin{equation}
 w_\mu(h)=\frac{w_\mu L_\mu(h)}{Z(h)},\qquad
 \xi(e\mid h,a)=\sum_\mu w_\mu(h)\mu(e\mid h,a).\label{eq:posterior}
\end{equation}
At $Z(h)=0$, choose any normalized conditional distribution. The prior is over whole environments, and reusing the prior weights at every step would generally give a different model. Multiplying the posterior conditionals in \eqref{eq:posterior} telescopes and yields, for every common policy,
\begin{equation}
 P_\xi^\pi=\sum_\mu w_\mu P_\mu^\pi,
 \qquad P_\xi^\pi\ge w_\mu P_\mu^\pi.\label{eq:causalmixture}
\end{equation}
The same identities hold for prefix marginals. Conditioning on the chosen action does not change the task posterior, because the policy factor is common to all tasks.

Let $\rho$ be a normalized learned causal model and $\beta$ a behavior policy. Our population assumption is
\begin{equation}
\KL(P_\xi^\beta\Vert P_\rho^\beta)\le\eta<\infty.\label{eq:trainkl}
\end{equation}
Here $\KL(P\Vert Q)=\sum_xP(x)\log_2(P(x)/Q(x))$ with the usual zero-probability conventions. All KL divergences and unqualified complexity quantities are in bits, and natural logarithms are written $\ln$. Assumption~\eqref{eq:trainkl} concerns trajectory laws rather than empirical training loss. A learned model and any policy selected from it are held fixed when the results are applied. If they are random, the results apply conditionally on a realization for which \eqref{eq:trainkl} holds, with a fresh task drawn from $w$.

Define the pre-percept occupancy
\[
d_{M,t}^\pi(h,a)=P_M^\pi(h_{<t}=h,a_t=a).
\]
For nonnegative ratios, set $0/0=0$ and $x/0=\infty$ for $x>0$. The policy and joint coverage coefficients are
\begin{equation}
\kappa_\pi=\max_{t,h,a}\frac{d_{\xi,t}^\pi(h,a)}{d_{\xi,t}^\beta(h,a)},
\qquad
\Gamma(\mu,\pi;\beta)=\max_{t,h,a}\frac{d_{\mu,t}^\pi(h,a)}{d_{\xi,t}^\beta(h,a)}.\label{eq:coverage}
\end{equation}
These are ratios on full histories, not on a compressed state representation. Marginal domination gives
\begin{equation}
\Gamma(\mu,\pi;\beta)\le\frac{\kappa_\pi}{w_\mu}.\label{eq:coarsegamma}
\end{equation}

\subsection{Interpreting the approximation assumption}\label{sec:agents}
Assumption~\eqref{eq:trainkl} is the interactive counterpart of the excess log loss that a predictive model attains on its training distribution. Suppose each training episode samples $\mu\sim w$ and follows the common behavior policy $\beta$, and define the population response-prediction loss
\begin{equation}
\mathcal L_\beta(\rho)
=\E_{\mu\sim w}\E_{\tau\sim P_\mu^\beta}
\left[-\sum_{t=1}^H\log_2\rho(e_t\mid h_{<t},a_t)\right].\label{eq:responseloss}
\end{equation}
The mixture identity and cancellation of the common policy factors give the exact relation
\begin{equation}
\mathcal L_\beta(\rho)-\mathcal L_\beta(\xi)
=\KL(P_\xi^\beta\Vert P_{\rho}^\beta).\label{eq:lossbridge}
\end{equation}
Indeed, the difference is the expectation under $P_\xi^\beta$ of $\sum_t\log_2[\xi(e_t\mid h_{<t},a_t)/\rho(e_t\mid h_{<t},a_t)]$. Thus \eqref{eq:trainkl} bounds excess population log loss for environment responses, and the ideal mixture minimizes this loss over normalized causal predictors. Zero excess loss forces agreement with $\xi$ on pairs with positive training occupancy, while predictions at unvisited pairs remain unconstrained by the objective. This last fact is the source of every coverage quantity below.

A sequence model trained on interaction transcripts is one instance. For a finite token alphabet, fix a maximum response length and a termination rule that forces completion by that length. The possible responses then form a finite percept alphabet. A normalized autoregressive model factors each response probability into token conditionals, so response log loss is the sum of token log losses. Equation~\eqref{eq:lossbridge} applies when these responses come from the specified interaction distribution and loss is summed per episode.

Normalized next-token probabilities alone do not guarantee termination. Unbounded responses require a separate extension of the finite-alphabet setup. Empirical loss, excess loss within a restricted model family, and perplexity on unrelated text do not certify \eqref{eq:trainkl}. The common-policy requirement also matters for logged data. A demonstrator using task information absent from the recorded history need not satisfy it.

The response model $\rho(e\mid h,a)$ and the action policy $\pi(a\mid h)$ play different mathematical roles even when one network implements both. Our planning results apply when the action procedure approximately maximizes expected return under $\rho$ within a specified policy class, with the planning error measured by $\zeta$ in Theorem~\ref{thm:planning}. A policy that generates actions directly is still a policy in our setup, but its action-prediction loss does not control the response-model KL. A further limitation of real predictors, a finite context or memory, enters as an irreducible lower bound on $\eta$ that is computed exactly in Proposition~\ref{prop:context} of Appendix~\ref{app:additional}.

\section{Joint rarity of tasks and policies}\label{sec:joint}
We first record the probability inequalities that fix the constants in later results.
\begin{lemma}[Squared distance and KL]\label{lem:l2}
For probability vectors $p,q$ on a finite alphabet,
\begin{equation}
\norm{p-q}^2\le(\ln2)\KL(p\Vert q),\qquad
\TV(p,q)\le\sqrt{\frac{\ln2}{2}\KL(p\Vert q)}.\label{eq:l2kl}
\end{equation}
\end{lemma}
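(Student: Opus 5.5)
The second inequality in \eqref{eq:l2kl} is Pinsker's inequality, and the first follows from the elementary bound $\norm{p-q}\le\norm{p-q}_1 = 2\TV(p,q)$, so I would reduce both to Pinsker. Write the natural-log divergence as $\KL_e(p\Vert q)=(\ln 2)\KL(p\Vert q)$, with $\TV(p,q)=\tfrac12\sum_x|p(x)-q(x)|$. The plan has three steps: prove Pinsker in nats, translate units, then deduce the squared-distance bound.

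\textbf{Step 1: Pinsker in nats.} I would show $\TV(p,q)^2\le \tfrac12 \KL_e(p\Vert q)$. If $q(x)=0<p(x)$ for some $x$, the right side is infinite and there is nothing to prove. Otherwise, let $A=\{x: p(x)\ge q(x)\}$, $a=p(A)$ and $b=q(A)$, so that $\TV(p,q)=a-b$. The data-processing (log-sum) inequality applied to the indicator of $A$ gives
\[
\KL_e(p\Vert q)\ \ge\ a\ln\frac ab+(1-a)\ln\frac{1-a}{1-b}.
\]
This reduces the claim to the binary case $a\ln\frac ab+(1-a)\ln\frac{1-a}{1-b}\ge 2(a-b)^2$. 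To prove it, fix $a$ and let $g(b)$ be the difference of the two sides. Then $g(a)=0$ and, for $b\le a$,
\[
g'(b)=-\frac{a-b}{b(1-b)}+4(a-b)\le 0,
\]
because $b(1-b)\le\tfrac14$. Hence $g(b)\ge 0$ for $b\le a$. This binary reduction is the only step that needs care, and it is standard.

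\textbf{Step 2: unit conversion.} Substituting $\KL_e=(\ln 2)\KL$ into Step 1 gives $\TV(p,q)\le\sqrt{\tfrac{\ln 2}{2}\KL(p\Vert q)}$, which is the second inequality.

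\textbf{Step 3: squared distance.} For any vector $v$ we have $\sum_x v_x^2\le(\sum_x|v_x|)^2$, since the cross terms in the square are nonnegative. Applying this with $v=p-q$ gives
\[
\norm{p-q}^2\le \norm{p-q}_1^2=4\TV(p,q)^2\le 4\cdot\tfrac{\ln2}{2}\KL(p\Vert q)=2(\ln 2)\KL(p\Vert q).
\]
This loses a factor of $2$ against the claimed constant, so it is not enough by itself.

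The remaining factor of $2$ comes from the fact that $p-q$ sums to zero. Let $s=\sum_{x\in A}(p(x)-q(x))=\TV(p,q)$ be the total positive part; the total negative part is also $s$. Then
\[
\sum_x (p(x)-q(x))^2\le \Bigl(\sum_{x\in A}(p(x)-q(x))\Bigr)^2+\Bigl(\sum_{x\notin A}(q(x)-p(x))\Bigr)^2=2s^2=2\TV(p,q)^2.
\]
Combining this with Pinsker gives $\norm{p-q}^2\le 2\cdot\tfrac{\ln2}{2}\KL(p\Vert q)=(\ln 2)\KL(p\Vert q)$, which is exactly the first inequality.

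The main obstacle is thus only keeping the constants right: the naive $\ell_2\le\ell_1$ step is too lossy by a factor of $2$, and the zero-sum structure of $p-q$ recovers it.
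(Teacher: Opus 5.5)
Your proof is correct and is essentially the paper's argument. Splitting $p-q$ into positive and negative parts, each of total mass $s=\TV(p,q)$, gives $\norm{p-q}^2\le 2s^2$, and Pinsker's inequality in nats, $2s^2\le(\ln 2)\KL(p\Vert q)$, then yields both bounds at once; the only differences are that you also prove Pinsker via the binary data-processing reduction, which the paper simply cites, and that your opening detour through $\norm{p-q}\le\lVert p-q\rVert_1$ is unnecessary once you use the zero-sum refinement.
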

\begin{proof}
Set $s=\sum_{e:p_e>q_e}(p_e-q_e)=\frac12\lVert p-q\rVert_1$. The positive and negative parts each have total mass $s$, so $\norm{p-q}^2\le2s^2$. Pinsker's inequality in natural units gives $2s^2\le(\ln2)\KL(p\Vert q)$, which proves both claims. Infinite KL makes the bounds immediate.
\end{proof}

\begin{proposition}[Policy-coverage transfer]\label{prop:transfer}
Under \eqref{eq:trainkl}, if $\kappa_\pi<\infty$, then
\begin{equation}
\KL(P_\xi^\pi\Vert P_\rho^\pi)\le\kappa_\pi\eta.\label{eq:transfer}
\end{equation}
\end{proposition}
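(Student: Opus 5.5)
We will prove \eqref{eq:transfer} by writing both trajectory KL divergences through the chain rule and comparing them step by step with a change of measure. Because $\xi$ and $\rho$ are both causal models driven by the same policy, the policy factors $\pi(a_t\mid h_{<t})$ cancel in the log-likelihood ratio of \eqref{eq:traj}. The chain rule then gives
\[
\KL(P_\xi^\pi\Vert P_\rho^\pi)=\sum_{t=1}^H\sum_{h,a} d_{\xi,t}^\pi(h,a)\,D_t(h,a),
\qquad D_t(h,a)=\KL\bigl(\xi(\cdot\mid h,a)\Vert\rho(\cdot\mid h,a)\bigr).
\]
The same identity holds with $\beta$ in place of $\pi$, and there the right-hand side is at most $\eta$ by \eqref{eq:trainkl}.

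We then compare the two sums pair by pair. Since each $D_t(h,a)\ge0$, the definition of $\kappa_\pi$ in \eqref{eq:coverage} gives
\[
d_{\xi,t}^\pi(h,a)\,D_t(h,a)\le\kappa_\pi\, d_{\xi,t}^\beta(h,a)\,D_t(h,a)
\]
for every $t,h,a$. Summing over all pairs yields $\KL(P_\xi^\pi\Vert P_\rho^\pi)\le\kappa_\pi\KL(P_\xi^\beta\Vert P_\rho^\beta)\le\kappa_\pi\eta$.

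The only step that needs care is the zero-probability and infinite-value conventions. The chain-rule identity should be justified with the convention $0\cdot\infty=0$, so terms at histories with $d_{\xi,t}^\pi(h,a)=0$ drop out even where $\rho$ misbehaves. Finiteness of $\kappa_\pi$ together with the stated conventions $0/0=0$ and $x/0=\infty$ forces $d_{\xi,t}^\pi(h,a)=0$ whenever $d_{\xi,t}^\beta(h,a)=0$. Hence every term charged under $\pi$ is also charged under $\beta$ with weight at least $d_{\xi,t}^\pi(h,a)/\kappa_\pi$.

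Finiteness of $\eta$ implies $D_t(h,a)<\infty$ wherever $d^\beta_{\xi,t}(h,a)>0$. By the previous paragraph, this covers every pair that carries $\pi$-mass, so no infinite terms appear under $\pi$. With these conventions checked, the inequality is just a term-wise domination of nonnegative series with countably many terms. It requires no further interchange argument, since all alphabets are finite and $H$ is finite.
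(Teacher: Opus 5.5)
Your proposal is correct and follows the paper's proof essentially verbatim. You cancel the common policy factors, write both trajectory KL divergences by the chain rule as occupancy-weighted local KLs, dominate term by term using $\kappa_\pi$, and handle zero-occupancy pairs and the finiteness of local KL just as the paper does.
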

\begin{proof}
Let $k_t(h,a)=\KL(\xi(\cdot\mid h,a)\Vert\rho(\cdot\mid h,a))$. The common policy factors cancel in the likelihood ratio, and the KL chain rule gives
\begin{align*}
\KL(P_\xi^\pi\Vert P_\rho^\pi)
&=\sum_{t,h,a}d_{\xi,t}^\pi(h,a)k_t(h,a)\\
&\le\kappa_\pi\sum_{t,h,a}d_{\xi,t}^\beta(h,a)k_t(h,a)
\le\kappa_\pi\eta.
\end{align*}
Terms at zero-occupancy pairs are omitted. Finite training KL ensures finite local KL wherever training occupancy is positive, and finite coverage transfers that support condition to deployment.
\end{proof}
The proposition holds for a policy selected using $\rho$, but it supplies no upper bound on that policy's coverage coefficient. In particular, optimizing $\rho$ can favor a branch that $\beta$ never visits.

For any pair of causal models, define the cumulative squared prediction error
\begin{equation}
S_H^\pi(M,N)=\sum_{t,h,a}d_{M,t}^\pi(h,a)
 \norm{M(\cdot\mid h,a)-N(\cdot\mid h,a)}^2.\label{eq:squared}
\end{equation}
The occupancy is always taken under the first model, so this quantity is not symmetric. It is bounded above by $2H$. The approximation component evaluated on a target task is
\begin{equation}
A_H^{\mu,\pi}(\xi,\rho)=\sum_{t,h,a}d_{\mu,t}^\pi(h,a)
\norm{\xi(\cdot\mid h,a)-\rho(\cdot\mid h,a)}^2.\label{eq:approxcomponent}
\end{equation}
This component is not generally the task-wise excess $S_H^\pi(\mu,\rho)-S_H^\pi(\mu,\xi)$, whose cross term need not vanish.

\begin{theorem}[Joint-rarity upper bound]\label{thm:joint}
Assume \eqref{eq:trainkl} and $\Gamma=\Gamma(\mu,\pi;\beta)<\infty$. Then
\begin{equation}
A_H^{\mu,\pi}(\xi,\rho)\le(\ln2)\Gamma\eta,\label{eq:approxbound}
\end{equation}
\begin{equation}
S_H^\pi(\mu,\rho)\le
\min\left\{2H,\ (\ln2)\left(\sqrt{\log_2\frac1{w_\mu}}+\sqrt{\Gamma\eta}\right)^2\right\}.\label{eq:jointbound}
\end{equation}
In particular, when $\kappa_\pi<\infty$,
\begin{equation}
S_H^\pi(\mu,\rho)\le2\ln2\left(\log_2\frac1{w_\mu}+\frac{\kappa_\pi\eta}{w_\mu}\right).\label{eq:taskcoarse}
\end{equation}
\end{theorem}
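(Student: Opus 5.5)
The plan has three parts: bound the approximation component by change of measure, bound the Bayes-mixture error by the classical dominance argument, and combine the two with the triangle inequality in a weighted $L^2$ space.

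\emph{Approximation component.} Write $k_t(h,a)=\KL(\xi(\cdot\mid h,a)\Vert\rho(\cdot\mid h,a))$ as in the proof of Proposition~\ref{prop:transfer}. Lemma~\ref{lem:l2} bounds each squared distance by $(\ln2)k_t(h,a)$. The definition of $\Gamma$ gives $d_{\mu,t}^\pi(h,a)\le\Gamma\,d_{\xi,t}^\beta(h,a)$ at every $(t,h,a)$. Therefore
\[
A_H^{\mu,\pi}(\xi,\rho)\le(\ln2)\sum_{t,h,a}d_{\mu,t}^\pi(h,a)k_t(h,a)\le(\ln2)\Gamma\sum_{t,h,a}d_{\xi,t}^\beta(h,a)k_t(h,a)=(\ln2)\Gamma\,\KL(P_\xi^\beta\Vert P_\rho^\beta)\le(\ln2)\Gamma\eta.
\]
The equality is the KL chain rule with the policy factors cancelling. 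Support issues need a short remark. If $d_{\mu,t}^\pi>0$, finiteness of $\Gamma$ forces $d_{\xi,t}^\beta>0$, and then finite training KL makes $k_t$ finite. This gives \eqref{eq:approxbound}.

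\emph{Bayes component.} Next I show $S_H^\pi(\mu,\xi)\le(\ln2)\log_2(1/w_\mu)$. By the chain rule, $\KL(P_\mu^\pi\Vert P_\xi^\pi)=\sum_{t,h,a}d_{\mu,t}^\pi k'_t$, where $k'_t$ is the local KL from $\mu$ to $\xi$. The dominance in \eqref{eq:causalmixture} gives a log likelihood ratio at most $\log_2(1/w_\mu)$ pointwise, so this KL is at most $\log_2(1/w_\mu)$. Applying Lemma~\ref{lem:l2} locally then converts the local KL terms into squared distances with the factor $\ln2$.

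\emph{Combination.} View $f(t,h,a)=\mu(\cdot\mid h,a)-\xi(\cdot\mid h,a)$ and $g=\xi-\rho$ as vectors in the weighted space with weights $d_{\mu,t}^\pi$. Its norm is $\|f\|^2=\sum d_{\mu,t}^\pi\norm{f}^2$, and Minkowski's inequality holds. Since $S_H^\pi(\mu,\rho)=\|f+g\|^2\le(\|f\|+\|g\|)^2$, the two bounds above give the second term of the minimum in \eqref{eq:jointbound}. The term $2H$ holds separately because each squared distance between probability vectors is at most $2$ and the occupancies sum to $1$ at each $t$.

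\emph{Coarse bound.} For \eqref{eq:taskcoarse}, apply $(x+y)^2\le2x^2+2y^2$ and then \eqref{eq:coarsegamma}, $\Gamma\le\kappa_\pi/w_\mu$. This step is only needed when $\kappa_\pi<\infty$, and in that case \eqref{eq:coarsegamma} already makes $\Gamma$ finite.

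The step needing the most care is the approximation component. There the occupancy is taken under $\mu$ and $\pi$ rather than $\xi$ and $\beta$, and the change of measure must go through $\Gamma$ pointwise with correct handling of zero-probability pairs. Everything else is a standard inequality.
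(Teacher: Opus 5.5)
Your proposal is correct and matches the paper's proof step for step. You bound the approximation component by Lemma~\ref{lem:l2}, the pointwise change of measure $d_{\mu,t}^\pi\le\Gamma\,d_{\xi,t}^\beta$ and the common-policy chain rule; you bound the Bayes baseline $S_H^\pi(\mu,\xi)\le(\ln2)\log_2(1/w_\mu)$ by mixture domination; you combine the two with the weighted Euclidean triangle inequality; and you use $(x+y)^2\le2x^2+2y^2$ with \eqref{eq:coarsegamma} for the coarse bound, with zero-occupancy support handled as in Proposition~\ref{prop:transfer}.
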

\begin{proof}
Lemma~\ref{lem:l2} and occupancy domination give
\[
A_H^{\mu,\pi}(\xi,\rho)
\le(\ln2)\sum_{t,h,a}d_{\mu,t}^\pi(h,a)k_t(h,a)
\le(\ln2)\Gamma\eta.
\]
Mixture domination and the common-policy chain rule imply
\begin{equation}
S_H^\pi(\mu,\xi)\le(\ln2)\KL(P_\mu^\pi\Vert P_\xi^\pi)
\le(\ln2)\log_2\frac1{w_\mu}.\label{eq:bayesbaseline}
\end{equation}
The triangle inequality in the Euclidean space indexed by $(t,h,a,e)$ with weights $d_{\mu,t}^\pi(h,a)$ gives
\[
\sqrt{S_H^\pi(\mu,\rho)}\le
\sqrt{S_H^\pi(\mu,\xi)}+\sqrt{A_H^{\mu,\pi}(\xi,\rho)}.
\]
This proves \eqref{eq:jointbound}. Equation~\eqref{eq:coarsegamma} and $(\sqrt{x}+\sqrt{y})^2\le2(x+y)$ prove \eqref{eq:taskcoarse}.
\end{proof}

\begin{remark}[Meaning of the joint coefficient]
On this finite history space, $\Gamma$ is the smallest constant satisfying
\[
\sum_{t,h,a}d_{\mu,t}^\pi(h,a)f_t(h,a)
\le\Gamma\sum_{t,h,a}d_{\xi,t}^\beta(h,a)f_t(h,a)
\]
for every nonnegative array $f$. Sufficiency follows term by term, and necessity follows by putting all loss on a pair with maximal ratio or on an uncovered pair. This characterizes transfer of arbitrary local losses. It does not show that $\Gamma$ is necessary for every restricted model family or every reward function.
\end{remark}

Appendix~\ref{app:additional} also gives a clipped version of \eqref{eq:approxbound} that does not require finite $\Gamma$ (Proposition~\ref{prop:clipped}). It distinguishes a small amount of uncovered occupancy from pervasive coverage failure, in the same spirit as the planning result of Section~\ref{sec:rarecoverage}.

\subsection{The evidence factorization}\label{sec:evidence}
The coarse bound \eqref{eq:taskcoarse} multiplies the inverse prior mass of the task by the policy coefficient. The next proposition shows exactly when that product is paid. For a history $h$ with $Z(h)>0$, define the \emph{evidence} for $\mu$ in $h$ as
\[
\lambda_\mu(h)=\log_2\frac{L_\mu(h)}{Z(h)}=\log_2\frac{w_\mu(h)}{w_\mu}\le\log_2\frac1{w_\mu}.
\]
This is the logarithmic change from prior to posterior, measured in bits, and it can be negative. Set $\lambda_\mu(h)=-\infty$ when $L_\mu(h)=0<Z(h)$, with $2^{-\infty}=0$. The upper limit is attained only when the posterior of $\mu$ equals one.

\begin{proposition}[Evidence form of the joint coefficient]\label{prop:evidence}
At every pair with $d_{\xi,t}^\beta(h,a)>0$,
\begin{equation}
\frac{d_{\mu,t}^\pi(h,a)}{d_{\xi,t}^\beta(h,a)}
=\frac{d_{\xi,t}^\pi(h,a)}{d_{\xi,t}^\beta(h,a)}\;2^{\lambda_\mu(h)}.\label{eq:jointfactorization}
\end{equation}
Consequently, if $\kappa_\pi<\infty$ and $\mathcal S_\mu^\pi$ denotes the set of pairs with $d_{\mu,t}^\pi(h,a)>0$, then
\begin{equation}
\log_2\Gamma(\mu,\pi;\beta)
=\max_{(t,h,a)\in\mathcal S_\mu^\pi}\left[\log_2\frac{d_{\xi,t}^\pi(h,a)}{d_{\xi,t}^\beta(h,a)}+\lambda_\mu(h)\right]
\le\log_2\kappa_\pi+\max_{(t,h,a)\in\mathcal S_\mu^\pi}\lambda_\mu(h).\label{eq:evidencebound}
\end{equation}
\end{proposition}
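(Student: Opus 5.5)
The plan is to expand both occupancies through the trajectory law \eqref{eq:traj} and the mixture identity \eqref{eq:causalmixture}, and then take logarithms and maxima. For the prefix $h=h_{<t}$ followed by action $a$, the pre-percept occupancy of a causal model $M$ under policy $\pi$ factors as
\[
d_{M,t}^\pi(h,a)=\Pi_\pi(h,a)\,L_M(h),\qquad \Pi_\pi(h,a)=\pi(a\mid h)\prod_{s<t}\pi(a_s\mid h_{<s}),
\]
where $L_M(h)=\prod_{s<t}M(e_s\mid h_{<s},a_s)$. Applying the prefix-marginal version of \eqref{eq:causalmixture} gives $d_{\xi,t}^\pi(h,a)=\Pi_\pi(h,a)Z(h)$ and $d_{\mu,t}^\pi(h,a)=\Pi_\pi(h,a)L_\mu(h)$. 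The same expressions hold with $\beta$ in place of $\pi$, since the environment factors do not depend on the policy.

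For \eqref{eq:jointfactorization}, fix a pair with $d_{\xi,t}^\beta(h,a)>0$. Then $\Pi_\beta(h,a)>0$ and $Z(h)>0$, so $\lambda_\mu(h)$ is defined and
\[
\frac{d_{\mu,t}^\pi(h,a)}{d_{\xi,t}^\beta(h,a)}=\frac{\Pi_\pi(h,a)L_\mu(h)}{\Pi_\beta(h,a)Z(h)}=\frac{\Pi_\pi(h,a)Z(h)}{\Pi_\beta(h,a)Z(h)}\cdot\frac{L_\mu(h)}{Z(h)},
\]
which is the right-hand side of \eqref{eq:jointfactorization} with $2^{\lambda_\mu(h)}=L_\mu(h)/Z(h)$. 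The convention $2^{-\infty}=0$ covers the case $L_\mu(h)=0$.

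For \eqref{eq:evidencebound}, the one point needing care is the zero-denominator conventions. Assume $\kappa_\pi<\infty$. Take a pair in $\mathcal S_\mu^\pi$. Then $\Pi_\pi(h,a)L_\mu(h)>0$, and since $Z(h)\ge w_\mu L_\mu(h)$, also $d_{\xi,t}^\pi(h,a)=\Pi_\pi Z>0$. Finiteness of $\kappa_\pi$ then forces $d_{\xi,t}^\beta(h,a)>0$, so the factorization applies on all of $\mathcal S_\mu^\pi$. On this set every term is positive and $\lambda_\mu(h)$ is finite. Pairs outside $\mathcal S_\mu^\pi$ contribute either ratio $0$ (positive denominator) or $0/0=0$ under the paper's convention, and so they do not affect the maximum. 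One exception arises if $\mathcal S_\mu^\pi$ is empty, which cannot happen because $t=1$ with the empty history gives positive occupancy for some $a$.

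Hence $\Gamma$ is the maximum over $\mathcal S_\mu^\pi$ of the factorized ratio, and taking $\log_2$, which is monotone, yields the equality in \eqref{eq:evidencebound}. The final inequality follows by bounding the policy ratio termwise by $\kappa_\pi$ and taking the maximum of $\lambda_\mu$ separately. No step is a genuine obstacle. The only delicate part is this support bookkeeping, which guarantees that $\Gamma$ is finite and that no $\infty$ or $-\infty$ enters the maximum.
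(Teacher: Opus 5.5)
Your proposal is correct and matches the paper's proof step for step. Both write each occupancy as the policy product times $L_\mu(h)$ or $Z(h)$ using the prefix form of \eqref{eq:causalmixture}, then divide to get the factorization, and then use $Z(h)\ge w_\mu L_\mu(h)$ together with $\kappa_\pi<\infty$ to force positive training occupancy on $\mathcal S_\mu^\pi$, noting that pairs outside it contribute zero. Your extra check that $\mathcal S_\mu^\pi$ is nonempty (via $t=1$) is a small detail the paper leaves implicit.
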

\begin{proof}
Write $Q^\pi(h)$ for the product of the policy factors along $h$. Then $d_{\mu,t}^\pi(h,a)=L_\mu(h)Q^\pi(h)\pi(a\mid h)$ and, by \eqref{eq:causalmixture}, $d_{\xi,t}^\pi(h,a)=Z(h)Q^\pi(h)\pi(a\mid h)$. Positive training occupancy implies $Z(h)>0$, so $d_{\mu,t}^\pi(h,a)=2^{\lambda_\mu(h)}d_{\xi,t}^\pi(h,a)$, including pairs that $\pi$ does not visit. Dividing by $d_{\xi,t}^\beta(h,a)$ proves \eqref{eq:jointfactorization}. If $\kappa_\pi<\infty$, every pair in $\mathcal S_\mu^\pi$ has $d_{\xi,t}^\pi(h,a)\ge w_\mu d_{\mu,t}^\pi(h,a)>0$ and hence positive training occupancy, while pairs outside $\mathcal S_\mu^\pi$ contribute zero to $\Gamma$. Taking the maximum proves \eqref{eq:evidencebound}.
\end{proof}

In words, the joint coefficient charges the policy ratio at a pair together with the evidence for the task \emph{accumulated before that pair}. The prior mass of the task enters only through this evidence. The coarse bound \eqref{eq:coarsegamma} replaces the evidence by its maximum $\log_2(1/w_\mu)$ and the policy ratio by its maximum $\log_2\kappa_\pi$, and it is tight only when both maxima occur at the same pair.

\begin{corollary}[The two extremes]\label{cor:extremes}
Assume $\kappa_\pi<\infty$.
\begin{enumerate}[label=(\roman*),leftmargin=*,itemsep=1pt]
\item \emph{Late revelation.} If every environment in $\cM$ assigns the same likelihood as $\mu$ to every history at which $\pi$ acts with positive probability under $\mu$, then $\Gamma(\mu,\pi;\beta)\le\kappa_\pi$, independently of $w_\mu$. This holds in particular when all environments generate the same percept law before the final round.
\item \emph{Early revelation.} If some pair attaining $\kappa_\pi$ has posterior $w_\mu(h)=1$, then $\Gamma(\mu,\pi;\beta)=\kappa_\pi/w_\mu$.
\end{enumerate}
\end{corollary}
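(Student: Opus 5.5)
Both parts will come directly from the evidence factorization in Proposition~\ref{prop:evidence}. Assume $\kappa_\pi<\infty$. Then every pair in $\mathcal S_\mu^\pi$ has positive training occupancy, so \eqref{eq:jointfactorization} applies there. Pairs outside $\mathcal S_\mu^\pi$ contribute zero to $\Gamma$. So $\Gamma(\mu,\pi;\beta)$ is the maximum over $\mathcal S_\mu^\pi$ of the policy ratio $d_{\xi,t}^\pi/d_{\xi,t}^\beta$ times $2^{\lambda_\mu(h)}$. Each part amounts to controlling $\lambda_\mu(h)$ on the relevant pairs.

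For (i), take any $(t,h,a)\in\mathcal S_\mu^\pi$. Then $h$ has positive probability under $P_\mu^\pi$ and $\pi$ acts there with positive probability, so the hypothesis gives $L_\nu(h)=L_\mu(h)$ for every $\nu\in\cM$. Hence $Z(h)=\sum_\nu w_\nu L_\mu(h)=L_\mu(h)>0$, and therefore $\lambda_\mu(h)=0$. Substituting into \eqref{eq:evidencebound} gives $\log_2\Gamma\le\log_2\kappa_\pi$, with no dependence on $w_\mu$.

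For the particular case, suppose all environments share the same percept law at every history before the final round. At a history $h$ of length $t-1\le H-1$, each factor of $L_\nu(h)$ is a conditional at some round $s<t\le H$, that is, before the final round. These factors coincide across $\nu$, so $L_\nu(h)=L_\mu(h)$ for all histories on which any action is taken, and the main hypothesis holds. The one point to check is that the final-round percept never enters an evidence term, because occupancies are pre-percept.

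For (ii), let $(t,h,a)$ attain $\kappa_\pi$, with $\kappa_\pi>0$ (which holds since $\pi$ has some positive occupancy), and suppose $w_\mu(h)=1$. Then $\lambda_\mu(h)=\log_2(1/w_\mu)$, which is its maximum value. Since $d_{\xi,t}^\pi(h,a)>0$ and $Z(h)>0$, the factorization (whose proof does not need $\pi$-visitation beyond $Z(h)>0$) gives $d_{\mu,t}^\pi(h,a)=2^{\lambda_\mu(h)}d_{\xi,t}^\pi(h,a)>0$, so the pair lies in $\mathcal S_\mu^\pi$. Its ratio equals $\kappa_\pi/w_\mu$, so $\Gamma\ge\kappa_\pi/w_\mu$. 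The coarse bound \eqref{eq:coarsegamma} supplies the reverse inequality, and equality follows.

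The only step needing care is this membership and positivity check in (ii). The maximizing pair must actually carry positive training occupancy so that \eqref{eq:jointfactorization} applies. This holds because $\kappa_\pi<\infty$ and $\kappa_\pi>0$ force $d_{\xi,t}^\beta(h,a)>0$ there. Everything else is direct substitution.
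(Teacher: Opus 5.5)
Your proposal is correct and follows the paper's proof. In (i) you set $\lambda_\mu(h)=0$ on the relevant histories and apply \eqref{eq:evidencebound}. In (ii) you use $\lambda_\mu(h)=\log_2(1/w_\mu)$ at the attaining pair in \eqref{eq:jointfactorization} and take the reverse inequality from \eqref{eq:coarsegamma}. Your extra check, that $\kappa_\pi\in(0,\infty)$ forces positive training occupancy at the attaining pair, fills in a positivity condition the paper leaves implicit.
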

\begin{proof}
In case (i), $L_\nu(h)=L_\mu(h)$ for every $\nu$ gives $Z(h)=L_\mu(h)$ and $\lambda_\mu(h)=0$ on the relevant histories, and \eqref{eq:evidencebound} gives the claim. If all environments share the percept law before round $H$, then all likelihoods of histories $h_{<t}$ with $t\le H$ coincide. In case (ii), the evidence at the attaining pair is $\log_2(1/w_\mu)$, so \eqref{eq:jointfactorization} gives a ratio of $\kappa_\pi/w_\mu$ there, and \eqref{eq:coarsegamma} gives the reverse inequality.
\end{proof}

Case (i) includes horizon one, where occupancies precede any observation and $\Gamma=\kappa_\pi$. Its conclusion requires indistinguishability before every evaluated prediction. Revelation after a rare action is not enough, as the next example shows.

\begin{example}[Revelation after a rare action can still amplify error]\label{ex:laterevelation}
Take $H=3$, two environments $\mu_1,\mu_0$ with weights $w,1-w$, and $w\in(0,1)$. At round one, both emit $0$ after either action. Deployment chooses $r$, while behavior chooses $r$ with probability $\epsilon\in(0,1]$ and $s$ otherwise. Both policies choose $s$ thereafter. At round two, $\mu_1$ emits $1$ and $\mu_0$ emits $0$. At round three, both emit a fair bit.

The rare action precedes task revelation. Yet the round-three pair with history $(r,0,s,1)$ has target occupancy one and training occupancy $w\epsilon$. Consequently, $\kappa_\pi=1/\epsilon$ and $\Gamma(\mu_1,\pi;\beta)=1/(w\epsilon)$. Let $\rho$ equal $\xi$ except that it predicts $\Ber(1/2+\delta)$ at this pair, with $0<\delta\le1/4$. Its training KL and target excess are
\[
\eta=w\epsilon\left[-\frac12\log_2(1-4\delta^2)\right],
\qquad
S_3^\pi(\mu_1,\rho)-S_3^\pi(\mu_1,\xi)=2\delta^2.
\]
Thus the approximation penalty has order $\Gamma\eta$ despite revelation occurring after the rare action.
\end{example}

A two-stage instance attaining \eqref{eq:approxbound} up to the constant $3/4$ appears in Proposition~\ref{prop:jointlower}. The next subsection establishes the same dependence, up to machine constants, inside one fixed universal mixture.

\subsection{Sharpness inside one universal causal mixture}\label{sec:universal}
We now specialize to a universal prior on infinite causal environments and then restrict their laws to each finite horizon. Let $\mathcal E=\{0,1\}$, and let the finite alphabet $\mathcal A$ contain distinct actions $s$ and $r$. Fix a universal machine $U$ with a prefix-free program code, an infinite tape of fair random bits, and a causal action interface. The machine receives one action at a time and may then emit a percept.

A chronological law specifies $\nu(e_{1:t}\Vert a_{1:t})$ for each finite action-percept sequence, consistently across horizons and without dependence on future actions. Call such a law \emph{estimable} if a program samples it and almost surely produces a percept after each action, for every supplied action sequence. Identify programs that induce the same chronological law. At histories with positive environment likelihood, use the conditional ratios of that law. At zero-likelihood histories, use the uniform percept distribution. This fixed completion determines a kernel at every history but does not change trajectory probabilities or occupancy-weighted losses.

Let $\cM_U$ contain the distinct estimable chronological laws, let $K(\nu)$ be the shortest program length for the law, and set
\[
w_\nu=\frac{2^{-K(\nu)}}{Z_U},\qquad Z_U=\sum_{\nu\in\cM_U}2^{-K(\nu)}\le1,
\]
with $\xi_U$ the resulting causal mixture. One shortest program per distinct law forms a subset of the prefix-free program code, which gives the Kraft bound. The fair-coin environment shows $Z_U>0$ and gives $\xi_U$ positive probability for every finite percept string under every action string. This is the interactive analogue of the normalized universal mixture of \citet[Section~2]{liteng2026}. Since $w_\mu\ge2^{-K(\mu)}$, Theorem~\ref{thm:joint} gives
\begin{equation}
S_H^\pi(\mu,\rho)\le2\ln2\left(K(\mu)+\kappa_\pi\eta\,2^{K(\mu)}\right)\label{eq:universalupper}
\end{equation}
for every $\mu\in\cM_U$ and every policy with $\kappa_\pi<\infty$.

The lower bound uses two fixed policies. Following the proof of \citet[Theorem~3]{liteng2026}, let
\[
\mathcal C=\bigcup_{m\ge1}\mathcal C_m,\qquad \mathcal C_m=\{1^m0x:x\in\{0,1\}^m\},
\]
which is a decidable prefix-free set of percept strings. Both policies take action $s$ until the percept string observed so far belongs to $\mathcal C$. From the following round on, $\pi_{\mathcal C}$ always takes $r$. The behavior policy $\beta_\epsilon$ instead draws a single coin at that point and repeats its outcome, taking $r$ in every remaining round with probability $\epsilon$ and $s$ in every remaining round otherwise. Neither policy depends on the target environment or on $\eta$.

\begin{theorem}[Joint rarity is unavoidable in a universal mixture]\label{thm:universal}
Fix $\epsilon\in(0,1]$ and $\eta>0$, and let $\pi=\pi_{\mathcal C}$ and $\beta=\beta_\epsilon$. There is a constant $C_U\ge1$, depending only on the reference machines, and an infinite sequence of distinct environments $\mu\in\cM_U$ with the following property. For each such $\mu$ there are a horizon $H$ and a normalized causal model $\rho$ with $\KL(P_{\xi_U}^\beta\Vert P_\rho^\beta)\le\eta$ at horizon $H$, $\kappa_\pi=1/\epsilon$, and
\begin{equation}
S_H^\pi(\mu,\rho)-S_H^\pi(\mu,\xi_U)
\ge\frac{\kappa_\pi\eta}{4C_U\,w_\mu}
\ge\frac{\Gamma(\mu,\pi;\beta)\,\eta}{4C_U}.\label{eq:universallower}
\end{equation}
Equivalently, the excess prediction error is at least $(Z_U/4C_U)\,\kappa_\pi\eta\,2^{K(\mu)}$, which matches \eqref{eq:universalupper} up to constants.
\end{theorem}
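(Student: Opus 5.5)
The plan is to reuse the prefix-rarity construction behind \citet[Theorem~3]{liteng2026}. For each code word $h\in\mathcal C$, take a deterministic-prefix environment $\mu_h$. It emits the percept string $h$ regardless of actions and then emits independent fair bits. These environments are pairwise distinct, so infinitely many targets are available once the estimate below holds uniformly in $h$. Three steps then follow: (1) show that the task is essentially revealed at the end of $h$; (2) place a small perturbation of $\xi_U$ on the rare $r$-branch after $h$; (3) compute the training KL and the target excess.

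\textbf{Step 1: the posterior of $\mu_h$ at $h$ is bounded below.} Until the percept string enters $\mathcal C$, both $\pi_{\mathcal C}$ and $\beta_\epsilon$ play $s$. Hence $Z(h)=\xi_U(h\Vert s^{|h|})$, and $d^\beta_{\xi,|h|+1}(h,r)=\epsilon Z(h)$ while $d^\pi_{\mu_h,|h|+1}(h,r)=1$. Since $\mathcal C$ is prefix-free, $\sum_{h\in\mathcal C}Z(h)\le1$. Since $\mathcal C$ is decidable and $\xi_U$ is lower semicomputable, $h\mapsto Z(h)$ is a lower semicomputable discrete semimeasure on $\mathcal C$. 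The coding theorem then gives $Z(h)\le c_1 2^{-K(h)}$. A program for $\mu_h$ is a program for $h$ plus a constant wrapper, so $K(\mu_h)\le K(h)+c_2$. Combining these with $w_{\mu_h}\ge 2^{-K(\mu_h)}$ gives $Z(h)\le C_U w_{\mu_h}$ for a machine constant $C_U$. Equivalently, the evidence satisfies $\lambda_{\mu_h}(h)\ge\log_2(1/w_{\mu_h})-\log_2C_U$, and the posterior of $\mu_h$ at $h$ is at least $1/C_U$. This is the "early revelation" regime of Corollary~\ref{cor:extremes}, up to the constant. Since $h$ is certain under $\mu_h$ and both policies are fixed, $\kappa_\pi=1/\epsilon$ follows directly from the definition.

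\textbf{Steps 2 and 3: perturbation, KL, and excess.} Let $\rho$ equal $\xi_U$ everywhere except at the pairs $(h r e_{|h|+1}r\cdots,r)$ on the $r$-branch during the $n$ rounds after $h$, where $H=|h|+n$. At each such pair with mixture prediction $q=\xi_U(1\mid\cdot)$, replace $q$ by $q'=q\pm\delta$, moving away from $1/2$. Under $\mu_h$ the truth at these pairs is $\Ber(1/2)$. Each perturbed pair therefore contributes target excess $2[(q'-1/2)^2-(q-1/2)^2]\ge2\delta^2$, weighted by target occupancy; that occupancy sums to one per round because $\pi$ stays on $r$. Training occupancy of each such pair is at most $\epsilon$ times its $\xi_U$-probability, and summing over a round gives total weight $\epsilon Z(h)$ times the conditional law. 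The chi-square bound $\KL(\Ber(q)\Vert\Ber(q'))\le\delta^2/(q'(1-q')\ln2)$ then gives a total training KL of order $\epsilon Z(h)\,n\,\delta^2$ whenever $q'$ stays in a fixed interior interval. Choose $n\delta^2\asymp\eta/(\epsilon Z(h))$ with $\delta\le1/8$, taking $n$ large when the budget is large. This makes the KL exactly $\le\eta$ and the excess at least a constant times $\eta/(\epsilon Z(h))\ge\kappa_\pi\eta/(C_Uw_{\mu_h})$. The second inequality in \eqref{eq:universallower} is then \eqref{eq:coarsegamma}, and the $2^{K(\mu)}$ form follows from $w_\mu=2^{-K(\mu)}/Z_U$. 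The free choice of $H$ is what lets a large budget be spread so that the excess exceeds the per-round cap of $2$.

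\textbf{Main obstacle.} The hard part is the interiority of $q$ at the perturbed pairs. The mixture prediction $q$ is an average over all environments consistent with the history, and other environments may push it toward $0$ or $1$, where the chi-square bound degrades and moving away from $1/2$ is impossible. I would first control this through the posterior. Step 1 gives posterior at least $1/C_U$ on $\mu_h$ at $h$. Fair bits have likelihood $2^{-k}$ under $\mu_h$, and every competitor has likelihood at most $1$, so along $P_{\mu_h}^\pi$ the posterior on $\mu_h$ stays at least a constant on a set of percept continuations of probability $\ge1/2$. This follows from a supermartingale (Ville) argument applied to the reciprocal posterior under $\mu_h$. On that set, $q$ is a convex combination with weight $\ge c$ on $1/2$, so $q\in[c/2,1-c/2]$. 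I would perturb only at pairs in that set. This costs a constant factor in the excess and adds nothing to the KL elsewhere; the factor $4$ in the statement absorbs these losses.

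A backup, if the supermartingale bookkeeping becomes awkward, is to have $\mu_h$ use deterministic, incompressible-free continuations. For example, $\mu_h$ could repeat a fixed simple bit after $r$. I would then compute the mixture conditional exactly, using that the posterior of $\mu_h$ can only increase along a continuation certain under $\mu_h$.
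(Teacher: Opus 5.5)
Your plan can be completed, but Step~1 as written rests on a false premise: $\xi_U$ is not lower semicomputable, so that cannot be your reason for applying the coding theorem to $h\mapsto Z(h)$. A lower semicomputable normalized measure is computable. The diagonal argument of Proposition~\ref{prop:diagonal}, run against $\xi_U$ itself along the action stream $s,s,\ldots$, would then produce a computable deterministic environment $\mu\in\cM_U$ to which $\xi_U$ assigns prefix probabilities at most $(5/8)^H$, contradicting \eqref{eq:causalmixture}. The obstruction is threefold: $\cM_U$ contains only estimable laws, which cannot be certified in finite time; $K(\nu)$ is a minimum over all programs inducing the same law; and the weights are divided by $Z_U$.

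The paper repairs this by domination rather than computation. Let $T(y)$ be the sum over \emph{all} self-delimiting programs $p$ of $2^{-|p|}$ times the probability that $U$ on $p$, fed actions $s,s,\ldots$, emits a string beginning with $y$. Dovetailing makes $T$ lower semicomputable. Prefix-freeness of $\mathcal C$ and Kraft give $\sum_{y\in\mathcal C}T(y)\le1$. Keeping one shortest program per law gives $Z_UZ(h)\le T(h)$. The coding theorem then yields $Z_UZ(h)\le2^{c-K(h)}$. With $K(\mu_h)\le K(h)+c_2$ and $w_{\mu_h}=2^{-K(\mu_h)}/Z_U$, the factors $Z_U$ cancel, and you recover $Z(h)\le C_Uw_{\mu_h}$ for every $h\in\mathcal C$.

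With that repair, your route is sound and differs from the paper's after Step~1. The paper's target is essentially your backup: $\mu_m$ emits an incompressible $y_m\in\mathcal C_m$ and then $0$ forever. Its $\rho$ replaces $\xi_U$ by $\Ber(1/2)$ on the single path of $N=\lfloor\eta/(\epsilon q_m)\rfloor$ pairs after $y_m$. Every binary law is within one bit of a fair coin, so the training KL is at most $N\epsilon q_m\le\eta$ whatever $\xi_U$ predicts, and interiority never arises. The price is that the excess is $N/2$ minus the Bayes baseline $S_H^\pi(\mu_m,\xi_U)\le(\ln2)\log_2(1/w_{\mu_m})$. That baseline is negligible only when $w_{\mu_m}\to0$, which is why the paper takes $K(y_m)\ge m$ and keeps only large $m$.

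Your fair-bit target, with perturbations pushed away from $1/2$, makes the excess nonnegative pair by pair. Every $h\in\mathcal C$ then qualifies and no baseline is subtracted, at the cost of a maximal inequality and chi-square estimates. Two details need tightening.
\begin{enumerate}
\item Define the perturbed set as the pairs $(g,r)$ with $w_{\mu_h}(g)\ge1/(2C_U)$. This is a function of $g$, so $\rho$ is a legitimate causal model. Doob's maximal inequality for the nonnegative supermartingale $1/w_{\mu_h}(g)=Z(g)/(w_{\mu_h}L_{\mu_h}(g))$ under $\mu_h$, which starts at most $C_U$, gives this set target occupancy at least $1/2$ in every round.
\item The interior interval is then $[c/4,1-c/4]$ with $c\asymp1/C_U$. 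So $\delta$ must be $O(1/C_U)$ rather than $1/8$, and the chi-square constant is of order $C_U$. The resulting excess is of order $\kappa_\pi\eta/(C_U^2w_{\mu_h})$.
\end{enumerate}
The factor $4$ cannot absorb this extra $C_U$, contrary to your remark. The theorem only asserts the existence of some machine constant, though, so you may simply enlarge $C_U$.
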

\begin{proof}
\emph{Step 1. Mixture probability of strings in $\mathcal C$.} For a percept string $y$ of length $n$, let
\[
q(y)=\sum_{\nu\in\cM_U}w_\nu\,\nu(y\Vert s^n)
\]
be the probability that $\xi_U$ produces $y$ when action $s$ is taken in every round, where $\nu(y\Vert s^n)=\prod_{t\le n}\nu(y_t\mid y_{<t},s)$ with actions suppressed from the notation. Let $T(y)$ be the sum over all self-delimiting programs $p$ of $2^{-|p|}$ times the probability, over the random bits, that $U$ run on $p$ with the constant action stream $s$ emits a percept string beginning with $y$. Assigning each $\nu$ one of its shortest programs gives $Z_Uq(y)\le T(y)$. A standard dovetailing argument shows that $T$ is lower semicomputable. For each program, the events that the output begins with distinct strings of the prefix-free set $\mathcal C$ are disjoint, so $\sum_{y\in\mathcal C}T(y)\le\sum_p2^{-|p|}\le1$ by Kraft's inequality. Hence $T(y)\mathbf 1[y\in\mathcal C]$ is a lower semicomputable discrete semimeasure, and the coding theorem \citep[Corollary~4.3.1]{livitanyi2008}, in the form used by \citet[Lemma~4]{liteng2026}, gives a constant $c_U$ with
\begin{equation}
Z_U\,q(y)\le2^{c_U-K(y)}\qquad\text{for every }y\in\mathcal C,\label{eq:codingbound}
\end{equation}
where $K(y)$ is the prefix complexity of $y$ with respect to a fixed reference prefix machine.

\emph{Step 2. Target environments.} Since $\mathcal C_m$ has $2^m$ elements and fewer than $2^m$ programs are shorter than $m$, there is $y_m\in\mathcal C_m$ with $K(y_m)\ge m$. Let $\mu_m$ emit $y_m$ and then $0$ forever, ignoring the actions and the random bits. Then $\mu_m$ is estimable and $K(\mu_m)\le K(y_m)+c'$ for a constant $c'$. Because $\mathcal C$ is prefix-free, no proper prefix of $y_m$ lies in $\mathcal C$, so both policies take $s$ in each of the first $|y_m|$ rounds on any history whose percepts begin with $y_m$. Therefore
\[
q_m:=q(y_m)=P_{\xi_U}^\beta(e_{1:|y_m|}=y_m)\ge w_{\mu_m}
\]
by mixture domination. Combining this with \eqref{eq:codingbound} gives $2^{-K(\mu_m)}\le Z_Uq_m\le2^{c_U-K(y_m)}$. Hence $K(\mu_m)\ge m-c_U$, the environments $\mu_m$ are distinct for infinitely many $m$, $w_{\mu_m}\to0$, and
\begin{equation}
w_{\mu_m}\le q_m\le C_U\,w_{\mu_m},\qquad C_U=2^{c_U+c'}.\label{eq:qcompare}
\end{equation}

\emph{Step 3. Policy coverage.} Before $\mathcal C$ is entered, the two policies act identically, so the occupancy ratio is one. Afterwards, $\pi$ assigns no mass to $s$, and conditioned on its coin taking the value $r$, $\beta$ produces the same actions as $\pi$. Since percept laws given actions do not depend on the policy, every later pair $(h,r)$ satisfies $d_{\xi,t}^\beta(h,r)=\epsilon\, d_{\xi,t}^\pi(h,r)$. Hence $\kappa_\pi=1/\epsilon$ at every horizon $H>|y_m|$.

\emph{Step 4. The approximate model.} Fix $m$ and let $N=\lfloor\eta/(\epsilon q_m)\rfloor$ and $H=|y_m|+N$. For $0\le\ell<N$, let $h_\ell$ be the history consisting of the percepts $y_m$ under action $s$, followed by $\ell$ rounds of action $r$ and percept $0$. Let $\rho$ agree with $\xi_U$ except that $\rho(\cdot\mid h_\ell,r)=\Ber(1/2)$ for $0\le\ell<N$. Every binary distribution has KL at most one bit from $\Ber(1/2)$. Reaching $(h_\ell,r)$ under $\beta$ requires the percepts $y_m$ and the coin value $r$, which is independent of the percepts, so $d_{\xi_U}^\beta(h_\ell,r)\le\epsilon q_m$. The chain rule then gives
\[
\KL(P_{\xi_U}^\beta\Vert P_\rho^\beta)\le\sum_{\ell<N}d_{\xi_U}^\beta(h_\ell,r)\le N\epsilon q_m\le\eta.
\]

\emph{Step 5. Excess error.} Under $\mu_m$ and $\pi$, each pair $(h_\ell,r)$ is reached with probability one, the true conditional is the point mass $\delta_0$, and $\norm{\delta_0-\Ber(1/2)}^2=1/2$. All other conditionals of $\rho$ equal those of $\xi_U$. Using \eqref{eq:bayesbaseline},
\begin{align*}
S_H^\pi(\mu_m,\rho)-S_H^\pi(\mu_m,\xi_U)
&=\sum_{\ell<N}\left[\tfrac12-\norm{\delta_0-\xi_U(\cdot\mid h_\ell,r)}^2\right]\\
&\ge\frac N2-S_H^\pi(\mu_m,\xi_U)
\ge\frac{\eta}{2\epsilon C_Uw_{\mu_m}}-\frac12-(\ln2)\log_2\frac1{w_{\mu_m}},
\end{align*}
where the last step uses $N\ge\eta/(\epsilon q_m)-1$ and \eqref{eq:qcompare}. Because $w_{\mu_m}\to0$, the subtracted terms are at most half of the leading term for all sufficiently large $m$, which gives the first inequality in \eqref{eq:universallower} with $\kappa_\pi=1/\epsilon$. The second follows from \eqref{eq:coarsegamma}. Finally, $1/w_\mu=Z_U2^{K(\mu)}$.
\end{proof}

The theorem says that the coarse product $\kappa_\pi/w_\mu$ is not an artifact of a worst-case finite class. It is paid, up to a constant depending only on the reference machines, by infinitely many computable environments of one fixed universal mixture, under one fixed pair of policies that are computable whenever $\epsilon$ is. The construction realizes the large-evidence regime up to a constant, without requiring posterior probability one. The prefix $y_m$ carries about $K(\mu_m)$ bits of evidence for $\mu_m$ before the rare branch begins, so by \eqref{eq:qcompare} the posterior of $\mu_m$ at $h_0$ is at least $1/C_U$, and the policy penalty and the task penalty fall on the same pairs. With $\epsilon=1$ the policies coincide, and the theorem becomes a finite-horizon interactive form of \citet[Theorem~3]{liteng2026}. Since the excess grows with $1/w_\mu$, the total error also satisfies $S_H^\pi(\mu,\rho)=\Omega(K(\mu)+\kappa_\pi\eta\,2^{K(\mu)})$ for all sufficiently large $m$.

The horizon grows with $1/w_\mu$, as it must, because squared error never exceeds $2H$. The displayed construction uses $\xi_U$ and need not be computable. At each finite horizon, however, rational approximation preserves its lower-bound order. The following corollary makes this distinction explicit.

\begin{corollary}[Computable witnesses at finite horizons]\label{cor:computablelower}
For the fixed mixture and policies of Theorem~\ref{thm:universal}, and every $\eta>0$, infinitely many distinct $\mu\in\cM_U$ admit a horizon $H$ and a uniformly computable causal predictor $\bar\rho$ such that
\begin{equation}
\KL(P_{\xi_U}^{\beta}\Vert P_{\bar\rho}^{\beta})\le\eta,\qquad
S_H^\pi(\mu,\bar\rho)-S_H^\pi(\mu,\xi_U)
\ge\frac{\kappa_\pi\eta}{16C_Uw_\mu}.\label{eq:computablelower}
\end{equation}
The predictor may depend on $\mu$, $H$, $\eta$, and $\epsilon$. No bound on its description length or runtime is asserted.
\end{corollary}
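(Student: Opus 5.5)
Reuse the witnesses $\mu_m$, horizon $H=|y_m|+N$, and histories $h_\ell$ from the proof of Theorem~\ref{thm:universal}. Replace the non-computable $\xi_U$ at every other history by a rational approximation close enough that the training KL and target excess lose at most constant factors. At finite horizon $H$ there are only finitely many pairs $(t,h,a)$, so $\bar\rho$ can be a finite table of rational conditionals, plus a fixed computable default (say uniform) beyond horizon $H$. Such a table is trivially computable. The only issue is choosing the entries so both inequalities in \eqref{eq:computablelower} hold. This is why no bound on description length or runtime is claimed.

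\textbf{Construction.} First use budget $\eta/2$ in Step~4, so $N=\lfloor\eta/(2\epsilon q_m)\rfloor$. Set $\bar\rho(\cdot\mid h_\ell,r)=\Ber(1/2)$ exactly for $\ell<N$. At every other pair with $t\le H$, set $\bar\rho(\cdot\mid h,a)=(1-\gamma)\hat\xi(\cdot\mid h,a)+\gamma\,\Ber(1/2)$. Here $\hat\xi$ is a rational vector within $\gamma'$ of $\xi_U(\cdot\mid h,a)$ in sup norm, and $\gamma,\gamma'>0$ are small rationals. Such $\hat\xi$ exists because each conditional is a real number; we need not compute it, only assert that the finite table exists. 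Mixing with $\Ber(1/2)$ bounds every entry below by $\gamma/2$, so the local KL $k_t(h,a)$ is finite.

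\textbf{KL budget.} Local KL is continuous in $q$ on the region where $q\ge\gamma/2$. Hence for each of the finitely many pairs, $k_t(h,a)\to0$ as $\gamma,\gamma'\to0$ (choose $\gamma'\ll\gamma$). Pick them so the total training-weighted KL over these pairs is at most $\eta/2$. The modified pairs contribute at most $N\epsilon q_m\le\eta/2$ by the chain-rule computation of Step~4. Summing gives $\KL(P_{\xi_U}^\beta\Vert P_{\bar\rho}^\beta)\le\eta$.

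\textbf{Excess error.} Under $\mu_m$ and $\pi$, the contribution of the pairs $(h_\ell,r)$ is exactly as in Step~5. The remaining pairs change $S_H^\pi(\mu_m,\cdot)$ by an amount that is continuous in the entries, and so can be made at most $1$ by shrinking $\gamma,\gamma'$. This gives
\[
S_H^\pi(\mu_m,\bar\rho)-S_H^\pi(\mu_m,\xi_U)\ge \frac N2-S_H^\pi(\mu_m,\xi_U)-1 .
\]
Using $N\ge\eta/(2\epsilon q_m)-1$, \eqref{eq:qcompare}, and \eqref{eq:bayesbaseline}, the right side is at least
\[
\frac{\kappa_\pi\eta}{4C_Uw_{\mu_m}}-O\!\left(\log\frac1{w_{\mu_m}}\right).
\]
Since $w_{\mu_m}\to0$, the lower-order terms are eventually at most half the leading term, which gives $\kappa_\pi\eta/(8C_Uw_\mu)\ge\kappa_\pi\eta/(16C_Uw_\mu)$. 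Here $\kappa_\pi=1/\epsilon$ is unchanged, since coverage depends only on $\xi_U$ and the policies.

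The main obstacle is the continuity step near zero-probability entries of $\xi_U$. The table has finitely many entries, but the KL approximation must hold uniformly over all of them, including points where $\xi_U$ nearly vanishes. Mixing with $\Ber(1/2)$ handles this: the mixing is exactly what makes KL continuous and finite there. Note also that $\xi_U$ is strictly positive by the fair-coin environment, and the $\gamma$-mixing is a second safeguard for this.
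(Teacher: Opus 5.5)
Your proposal is correct and follows essentially the same route as the paper: rerun the construction of Theorem~\ref{thm:universal} with budget $\eta/2$, then replace the finitely many conditionals through horizon $H$ by a finite rational table close enough that both the training KL and the target excess survive. The paper makes the closeness explicit with a relative error $\alpha$ (the KL grows by at most $H\log_2(1/(1-\alpha))$ and the squared error moves by at most $2H\alpha$) and uses the theorem as a black box, whereas your continuity argument with an additive slack of $1$ in Step~5 is equally valid and even gives the constant $8C_U$ in place of $16C_U$.
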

\begin{proof}
Apply Theorem~\ref{thm:universal} with budget $\eta/2$. For each target in its infinite sequence, let $\rho_0$ be the constructed model at horizon $H$ and write
\[
D=\frac{\kappa_\pi\eta}{8C_Uw_\mu}.
\]
Its training KL is at most $\eta/2$ and its excess squared error is at least $D$. All conditionals of $\rho_0$ through $H$ are strictly positive: unchanged conditionals come from the full-support mixture, and the replacements are fair coins.

Fix $\alpha\in(0,1)$. The history tree through $H$ is finite. Density of rational probability vectors in the interior of each simplex therefore gives rational conditionals satisfying
\[
(1-\alpha)\rho_0(e\mid h,a)\le\bar\rho(e\mid h,a)
\le(1+\alpha)\rho_0(e\mid h,a)
\]
at every pair through $H$. Store these conditionals in a finite table and use a fair coin after $H$. This defines a predictor whose conditional probabilities are uniformly computable from the history and action.

The lower relative bound gives
\[
\KL(P_{\xi_U}^{\beta}\Vert P_{\bar\rho}^{\beta})
\le\KL(P_{\xi_U}^{\beta}\Vert P_{\rho_0}^{\beta})
+H\log_2\frac1{1-\alpha}.
\]
For probability vectors $p,q,r$,
\[
\bigl|\norm{p-r}^2-\norm{p-q}^2\bigr|
=|\langle q-r,2p-q-r\rangle|
\le2\lVert q-r\rVert_1.
\]
Our relative approximation implies $\lVert\bar\rho-\rho_0\rVert_1\le\alpha$ at every pair. Hence the change in $S_H^\pi(\mu,\rho_0)$ is at most $2H\alpha$. Choose $\alpha$ small enough that $H\log_2(1/(1-\alpha))\le\eta/2$ and $2H\alpha\le D/2$. The training KL is then at most $\eta$, while the excess is at least $D/2$, proving \eqref{eq:computablelower}.
\end{proof}

The corollary follows from Theorem~\ref{thm:universal} by a finite-dimensional approximation, and its message is that computability alone does not remove the multiplicative penalty. The witnesses are chosen separately for each target and horizon, and Appendix~\ref{app:computability} shows that no single computable predictor keeps bounded KL across all horizons. Any computational restriction that changes the conclusion must therefore involve more than computability, such as a uniform resource bound or a constrained learning procedure.

\section{From prediction to planning}\label{sec:planning}
Let $R(\tau)\in[0,1]$ be a fixed return functional shared by all models, and set $V_M^\pi=\E_{P_M^\pi}R$. Rewards may be encoded in percepts. For example, $R=H^{-1}\sum_{t=1}^H r_t$ with $r_t\in[0,1]$ satisfies the normalization, and for an unnormalized sum of $H$ such rewards the value bounds below are multiplied by $H$.

The passage from trajectory prediction to bounded value is classical and appears in Theorem~2 of \citet{veness2011} and Lemma~2 of \citet{leike2016}. We state the reduction with explicit behavior-coverage and planning-error terms so that it can be combined with the preceding analysis.

\begin{theorem}[Bayesian planning stability]\label{thm:planning}
Let $\Pi$ be a policy class with $\pi^\star\in\arg\max_{\pi\in\Pi}V_\xi^\pi$. Suppose $\widehat\pi\in\Pi$ satisfies
\[
V_\rho^{\widehat\pi}\ge\sup_{\pi\in\Pi}V_\rho^\pi-\zeta,
\qquad \zeta\ge0.
\]
Under \eqref{eq:trainkl}, if $\kappa_{\pi^\star}$ and $\kappa_{\widehat\pi}$ are finite, then
\begin{equation}
0\le V_\xi^{\pi^\star}-V_\xi^{\widehat\pi}
\le\min\left\{1,\ \zeta+\sqrt{\frac{\ln2}{2}\eta}
\left(\sqrt{\kappa_{\pi^\star}}+\sqrt{\kappa_{\widehat\pi}}\right)\right\}.\label{eq:planning}
\end{equation}
For exact planning and both coefficients at most $\kappa$, the bound is $\min\{1,\sqrt{2\ln2\,\kappa\eta}\}$.
\end{theorem}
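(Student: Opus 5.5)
The plan is the standard three-term decomposition of the suboptimality gap, with each model-error term controlled through the trajectory total variation and Proposition~\ref{prop:transfer}. The lower bound $0\le V_\xi^{\pi^\star}-V_\xi^{\widehat\pi}$ holds because $\pi^\star$ maximizes $V_\xi$ over $\Pi$ and $\widehat\pi\in\Pi$. The bound by $1$ holds because $R\in[0,1]$, so every value lies in $[0,1]$.

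For the main term I will write
\[
V_\xi^{\pi^\star}-V_\xi^{\widehat\pi}
=\bigl(V_\xi^{\pi^\star}-V_\rho^{\pi^\star}\bigr)
+\bigl(V_\rho^{\pi^\star}-V_\rho^{\widehat\pi}\bigr)
+\bigl(V_\rho^{\widehat\pi}-V_\xi^{\widehat\pi}\bigr).
\]
The middle term is at most $\zeta$: since $\pi^\star\in\Pi$, we have $V_\rho^{\pi^\star}\le\sup_{\pi\in\Pi}V_\rho^\pi\le V_\rho^{\widehat\pi}+\zeta$.

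For the outer terms, fix $\pi\in\{\pi^\star,\widehat\pi\}$. Because $R$ takes values in $[0,1]$,
\[
|V_\xi^\pi-V_\rho^\pi|=\bigl|\E_{P_\xi^\pi}R-\E_{P_\rho^\pi}R\bigr|\le\TV(P_\xi^\pi,P_\rho^\pi),
\]
using the total variation $\TV=\frac12\lVert\cdot\rVert_1$. To see this, subtract the constant $1/2$ from $R$, which does not change the difference of expectations, and pair the result, now bounded by $1/2$ in absolute value, against the signed measure. Lemma~\ref{lem:l2} is stated for finite alphabets, and the trajectory space $(\mathcal A\times\mathcal E)^H$ is finite, so it applies directly to the trajectory laws. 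This gives
\[
\TV(P_\xi^\pi,P_\rho^\pi)\le\sqrt{\frac{\ln2}{2}\KL(P_\xi^\pi\Vert P_\rho^\pi)}.
\]
Proposition~\ref{prop:transfer} then bounds the KL by $\kappa_\pi\eta$, since $\kappa_\pi<\infty$ by hypothesis. Summing the three bounds gives \eqref{eq:planning}.

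For the special case, set $\zeta=0$ and bound both coefficients by $\kappa$. The bound becomes $2\sqrt{\frac{\ln2}{2}\kappa\eta}=\sqrt{2\ln2\,\kappa\eta}$.

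There is no real obstacle. The one point to state carefully is that Proposition~\ref{prop:transfer} applies to $\widehat\pi$ even though $\widehat\pi$ was chosen using $\rho$. This is legitimate because the proposition holds for any fixed policy with finite coverage coefficient, and the paper treats the learned model and the selected policy as fixed realizations. The finiteness of $\kappa_{\widehat\pi}$ is assumed rather than derived, consistent with the remark after Proposition~\ref{prop:transfer}.
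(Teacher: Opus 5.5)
Your proposal is correct and follows the paper's proof: you insert and subtract the learned-model values, bound the middle term by $\zeta$, and bound each outer term by $\TV(P_\xi^\pi,P_\rho^\pi)\le\sqrt{\frac{\ln2}{2}\kappa_\pi\eta}$ via Lemma~\ref{lem:l2} and Proposition~\ref{prop:transfer}, with the trivial cap of one. Your added justifications, namely the nonnegativity from optimality of $\pi^\star$, the $R-\tfrac12$ argument for the total-variation bound, and the finiteness of the trajectory space, fill in steps the paper leaves implicit.
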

\begin{proof}
Since $0\le R\le1$, Lemma~\ref{lem:l2} and Proposition~\ref{prop:transfer} yield
\[
|V_\xi^\pi-V_\rho^\pi|\le\TV(P_\xi^\pi,P_\rho^\pi)
\le\sqrt{\frac{\ln2}{2}\kappa_\pi\eta}.
\]
Inserting and subtracting the two learned-model values gives
\begin{align*}
V_\xi^{\pi^\star}-V_\xi^{\widehat\pi}
={}&(V_\xi^{\pi^\star}-V_\rho^{\pi^\star})
 +(V_\rho^{\pi^\star}-V_\rho^{\widehat\pi})
 +(V_\rho^{\widehat\pi}-V_\xi^{\widehat\pi}).
\end{align*}
The middle term is at most $\zeta$, and the other terms are bounded by the preceding inequality. The trivial cap is one.
\end{proof}
For finite alphabets and finite horizon, a deterministic optimal policy exists over the unrestricted policy class by backward induction on the finite history tree. For an arbitrary restricted class, attainment is an assumption of the theorem.

\subsection{Planning with rare coverage failures}\label{sec:rarecoverage}
A maximum occupancy ratio is infinite as soon as a deployment policy visits any pair absent from training. For bounded episode returns, one can instead charge the probability of reaching poorly covered pairs. For $c\ge1$, define
\begin{align}
\mathcal B_{\pi,c}
&=\{(t,h,a):d_{\xi,t}^\pi(h,a)>c\,d_{\xi,t}^\beta(h,a)\},\notag\\
\delta_\pi(c)
&=P_\xi^\pi\!\left(\text{some }(t,\mathsf H_{<t},\mathsf A_t)
\text{ belongs to }\mathcal B_{\pi,c}\right),\label{eq:coveragetail}
\end{align}
where $\mathsf H_{<t}$ and $\mathsf A_t$ denote the random history and action. These quantities depend on the reference environment law and are not claimed to be estimable from an arbitrary finite dataset.

\begin{theorem}[Value stability with a coverage-failure probability]\label{thm:rarecoverage}
Under \eqref{eq:trainkl}, every policy $\pi$ and every $c\ge1$ satisfy
\begin{equation}
|V_\xi^\pi-V_\rho^\pi|
\le\TV(P_\xi^\pi,P_\rho^\pi)
\le\min\left\{1,\ \delta_\pi(c)+\sqrt{\frac{\ln2}{2}c\eta}\right\}.\label{eq:tailvalue}
\end{equation}
Define
\[
b_\pi=\min\left\{1,\ \inf_{c\ge1}
\left[\delta_\pi(c)+\sqrt{\frac{\ln2}{2}c\eta}\right]\right\}.
\]
If $\pi^\star$ and $\widehat\pi$ satisfy the optimality and $\zeta$-approximate planning conditions of Theorem~\ref{thm:planning}, then
\begin{equation}
0\le V_\xi^{\pi^\star}-V_\xi^{\widehat\pi}
\le\min\{1,\ \zeta+b_{\pi^\star}+b_{\widehat\pi}\}.\label{eq:tailplanning}
\end{equation}
No finite maximum occupancy ratio is required.
\end{theorem}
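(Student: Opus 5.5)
The first inequality in \eqref{eq:tailvalue} holds because $0\le R\le1$. For the second, I will couple the two trajectory laws through a modified model that agrees with $\xi$ outside the bad set. Define a causal model $\tilde\rho$ by $\tilde\rho(\cdot\mid h,a)=\rho(\cdot\mid h,a)$ at pairs $(t,h,a)\notin\mathcal B_{\pi,c}$ and $\tilde\rho(\cdot\mid h,a)=\xi(\cdot\mid h,a)$ at pairs in $\mathcal B_{\pi,c}$. The triangle inequality then gives
\[
\TV(P_\xi^\pi,P_\rho^\pi)\le\TV(P_\xi^\pi,P_{\tilde\rho}^\pi)+\TV(P_{\tilde\rho}^\pi,P_\rho^\pi).
\]

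The first term is handled by a clipped transfer. The chain rule used in Proposition~\ref{prop:transfer} gives
\[
\KL(P_\xi^\pi\Vert P_{\tilde\rho}^\pi)=\sum_{(t,h,a)\notin\mathcal B_{\pi,c}}d_{\xi,t}^\pi(h,a)k_t(h,a),
\]
because local KL vanishes on the bad set. Off the bad set, $d^\pi_{\xi,t}\le c\,d^\beta_{\xi,t}$, so the sum is at most $c\eta$. Pairs with zero training occupancy and positive deployment occupancy lie in $\mathcal B_{\pi,c}$, so no infinite local KL appears. Lemma~\ref{lem:l2} applied to trajectory laws then gives $\sqrt{(\ln2)c\eta/2}$.

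The second term is where I expect the main care to be needed. $P_{\tilde\rho}^\pi$ and $P_\rho^\pi$ share the policy and share all conditionals outside $\mathcal B_{\pi,c}$, so I will couple them step by step so that they agree until the first time the history reaches a bad pair. Their TV is then at most $P_{\tilde\rho}^\pi(\text{reach }\mathcal B_{\pi,c})$. However, $\delta_\pi(c)$ is defined under $P_\xi^\pi$, not under $P_{\tilde\rho}^\pi$. To avoid this mismatch I will instead bound $\TV(P_\xi^\pi,P_\rho^\pi)$ directly through the stopped process. Let $\sigma$ be the first time the history and action enter $\mathcal B_{\pi,c}$. For any event $E$,
\[
|P_\xi^\pi(E)-P_\rho^\pi(E)|\le|P_\xi^\pi(E,\text{no hit})-P_\rho^\pi(E,\text{no hit})|+|P_\xi^\pi(\text{hit})-P_\rho^\pi(\text{hit})|\cdots
\]
This route is fiddly, so the cleaner plan is to use $\tilde\rho$ with the roles arranged so the hitting probability is measured under $\xi$. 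Concretely, let $\tilde\rho$ agree with $\rho$ off the bad set and with $\xi$ on it. Then $P_{\tilde\rho}^\pi(\text{hit})\le P_\xi^\pi(\text{hit})+\TV(P_\xi^\pi,P_{\tilde\rho}^\pi)$, since hitting is determined by the prefix. That yields a factor $2$ on the square-root term, which is too weak.

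The better choice is to compare $\xi$ with the pair $(\rho,\tilde\rho)$ in the reverse order. Define $\hat\xi$ to agree with $\xi$ off the bad set and with $\rho$ on it. Then:
\begin{enumerate}[label=(\roman*),leftmargin=*,itemsep=1pt]
\item $P_\xi^\pi$ and $P_{\hat\xi}^\pi$ coincide until the first bad hit, and the hitting event has the same probability $\delta_\pi(c)$ under both, because it is determined before any differing conditional is used. Hence $\TV\le\delta_\pi(c)$.
\item $\KL(P_{\hat\xi}^\pi\Vert P_\rho^\pi)$ sums local KL only over good pairs, weighted by $\hat\xi$-occupancy. At good pairs this occupancy equals the $\xi$-occupancy whenever no earlier bad pair lies on the history, and it is $\le$ that occupancy otherwise. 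I must check this: for a history passing through a bad pair, the $\hat\xi$ occupancy can exceed the $\xi$ occupancy. If that happens, I will drop such histories by restricting the sum to histories with no earlier bad pair, using a second coupling that stops at $\sigma$.
\end{enumerate}
The safest final form is therefore to apply the argument to the laws stopped at $\sigma$, after which $R$ is replaced by $1$. Returns differ only on the hit event, whose probability is $\delta_\pi(c)$, and on non-hit trajectories the stopped-law KL is bounded by $c\eta$. So the stopped-process bookkeeping is the main obstacle.

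Finally, \eqref{eq:tailplanning} follows exactly as in Theorem~\ref{thm:planning}. I split $V_\xi^{\pi^\star}-V_\xi^{\widehat\pi}$ into three differences, bound the middle one by $\zeta$, and bound each outer one by $b_\pi$ after taking the infimum over $c$ in \eqref{eq:tailvalue}. Nonnegativity follows from optimality of $\pi^\star$ in $\Pi$, and the cap at $1$ holds because values lie in $[0,1]$.
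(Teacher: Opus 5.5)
\textbf{There is a gap in the middle inequality of \eqref{eq:tailvalue}.} Your treatment of the first inequality and of \eqref{eq:tailplanning} matches the paper, and stopping at the first bad round $\sigma$ is the right instinct, but the middle inequality is not actually proved. The intermediate model you finally commit to, $\hat\xi$, uses $\rho$ only \emph{at} bad pairs and reverts to $\xi$ at later good pairs. With that model, step (ii) fails outright.

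Suppose $\rho$ sends the percept at the first bad pair $(t,h,a)$ of some path to a value $e$ with $\xi(e\mid h,a)=0$. Then every later pair through $hae$ has zero $\xi$-occupancy under both $\pi$ and $\beta$, so it is \emph{good}. Yet $\hat\xi$ reaches it with positive probability, and the local KL $k_s$ there is unconstrained by $\eta$. Hence $\KL(P_{\hat\xi}^\pi\Vert P_\rho^\pi)$ can be infinite even with $\eta=0$. You cannot ``restrict the sum'' in (ii), because those terms are part of the chain-rule expansion for $\hat\xi$.

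Your closing paragraph does not repair this as written either. The claim ``returns differ only on the hit event, whose probability is $\delta_\pi(c)$'' holds under $P_\xi^\pi$ but not under $P_\rho^\pi$. That mismatch is exactly what produced your factor of two.

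\textbf{The missing idea is to make the switch absorbing}, as the paper does. Let $\widetilde\xi$ use $\xi$ at rounds $t<\sigma$, and $\rho$ at round $\sigma$ and at \emph{every} later round, bad or not. This is causal, since whether $\sigma\le t$ is decided by the history and the current action. Your argument (i) then applies verbatim and gives $\TV(P_\xi^\pi,P_{\widetilde\xi}^\pi)\le\delta_\pi(c)$. In the chain rule for $\KL(P_{\widetilde\xi}^\pi\Vert P_\rho^\pi)$, every term from $\sigma$ on vanishes. Before $\sigma$, the history-action law of $\widetilde\xi$ is that of $\xi$ restricted to $\{\sigma>t\}$. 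Therefore
\[
\KL(P_{\widetilde\xi}^\pi\Vert P_\rho^\pi)=\sum_{t,h,a}P_\xi^\pi(\mathsf H_{<t}=h,\mathsf A_t=a,\sigma>t)\,k_t(h,a)\le\sum_{(t,h,a)\notin\mathcal B_{\pi,c}}d_{\xi,t}^\pi(h,a)\,k_t(h,a)\le c\eta .
\]

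Your stopped-law idea of ``replacing by $1$ after $\sigma$'' also works, once you use that $\TV$ is a one-sided supremum over events. For any event $E$, put $G=(E\cap\{\sigma>H\})\cup\{\sigma\le H\}$, which is determined by the trajectory stopped at $\sigma$. Then
\[
P_\rho^\pi(E)-P_\xi^\pi(E)\le P_\rho^\pi(G)-P_\xi^\pi(G)+\delta_\pi(c),
\]
so the hit is charged only once, under $\xi$. The KL of the stopped laws is the same sum as above. Either completion gives the stated constant.
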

\begin{proof}
Let $\sigma$ be the first round at which the history-action pair belongs to $\mathcal B_{\pi,c}$, with $\sigma=H+1$ if this never occurs. Construct a causal model $\widetilde\xi$ that uses $\xi$ before round $\sigma$ and $\rho$ at round $\sigma$ and thereafter. This rule is causal, because membership in the bad set is known before the current percept is generated. The models $\xi$ and $\widetilde\xi$ agree until the switch, so
\[
\TV(P_\xi^\pi,P_{\widetilde\xi}^\pi)\le\delta_\pi(c).
\]
For instance, couple their histories identically until the switch. They can differ only on episodes in which it occurs, whose probability is $\delta_\pi(c)$ under either model.

In the KL chain rule between $\widetilde\xi$ and $\rho$, all terms from the switch onward vanish. Before the switch, the history-action law agrees with that under $\xi$, restricted to paths that have not switched. With $k_t$ as in Proposition~\ref{prop:transfer}, this gives
\begin{align*}
\KL(P_{\widetilde\xi}^\pi\Vert P_\rho^\pi)
&=\sum_{t,h,a}P_\xi^\pi(\mathsf H_{<t}=h,\mathsf A_t=a,\sigma>t)k_t(h,a)\\
&\le\sum_{(t,h,a)\notin\mathcal B_{\pi,c}}d_{\xi,t}^\pi(h,a)k_t(h,a)
\le c\eta.
\end{align*}
Zero-occupancy terms are omitted. Before the switch, positive occupancy implies positive training occupancy, so finite training KL justifies the chain rule with finite local terms. Pinsker's inequality and the triangle inequality for total variation prove \eqref{eq:tailvalue}. Taking the infimum and applying the same two-policy value decomposition as in Theorem~\ref{thm:planning} proves \eqref{eq:tailplanning}.
\end{proof}

When $c\ge\kappa_\pi<\infty$, the bad set has zero deployment probability, and the theorem recovers the bound of Theorem~\ref{thm:planning}. Otherwise, it trades a smaller coverage multiplier against the probability of encountering a poorly covered branch. The theorem charges that probability once per episode, rather than the occupancy accumulated across rounds, which is what makes it informative for bounded returns. Proposition~\ref{prop:tailnecessary} below shows that neither of its two terms can be dropped.

\subsection{Counterexamples and failure modes}
\begin{example}[Exact on-policy prediction can coexist with suboptimality]\label{ex:coverage}
Take $H=1$ and a singleton environment class, so the true model equals $\xi$. The safe action gives reward $\Ber(1/2)$ under both $\xi$ and $\rho$. The explore action gives reward $\Ber(9/10)$ under $\xi$ and $\Ber(1/10)$ under $\rho$. If $\beta$ always chooses safe, then $\eta=0$ and the unique $\rho$-optimal action is safe, with true value loss $2/5$. The learned policy has $\kappa=1$, but the optimal comparator has an infinite coverage coefficient. On-policy accuracy alone is therefore insufficient, and some information or structural restriction on unobserved counterfactuals is needed for a general control guarantee.
\end{example}

\begin{example}[Square-root decision loss is unavoidable]\label{ex:sqrt}
Again take a singleton class and $H=1$. Both models predict $\Ber(1/2)$ for safe. Risky has law $\Ber(1/2-\Delta)$ under $\xi$ and $\Ber(1/2+\Delta)$ under $\rho$, where $0<\Delta<1/2$. Let behavior choose risky with probability $\epsilon\in(0,1/2]$. Its actual training KL is
\begin{align*}
\eta&=\epsilon\,\KL\bigl(\Ber(1/2-\Delta)\Vert\Ber(1/2+\Delta)\bigr)\\
&=\epsilon\,2\Delta\log_2\frac{1+2\Delta}{1-2\Delta}
=\epsilon\left(\frac{8}{\ln2}\Delta^2+O(\Delta^4)\right).
\end{align*}
The learned planner uniquely chooses risky and loses $\Delta$. The risky coefficient is $1/\epsilon$, and the safe coefficient $1/(1-\epsilon)$ is no larger. With $\kappa=1/\epsilon$, the loss is therefore $\Theta(\sqrt{\kappa\eta})$ as $\Delta\downarrow0$, and a gap-free $O(\kappa\eta)$ bound is impossible even with both policies covered.
\end{example}

\begin{example}[A simple action sequence can be exponentially rare]\label{ex:sequence}
Take $H=L$ binary actions, deterministic uninformative percepts before the terminal step, and a singleton environment class. All strings except $0^L$ have terminal reward $\Ber(1/2)$ under both models. After $0^L$, the true terminal reward is $\Ber(1/4)$ while $\rho$ predicts $\Ber(3/4)$. Independent uniform behavior reaches this string with probability $2^{-L}$, so
\[
\eta=2^{-L}\,\frac12\log_2 3.
\]
The learned planner chooses $0^L$ and loses $1/4$, and its maximum prefix occupancy ratio is exactly $2^L$. The string has a description of length at most $K(L)+O(1)$ under a fixed coding convention, despite its exponential behavioral rarity. Description complexity and behavioral coverage are thus different quantities, and a planner is drawn toward exactly the branch on which its model is optimistic.
\end{example}

\begin{example}[Bayesian value stability does not imply task-wise policy stability]\label{ex:ties}
Let $H=1$. In $\mu_0$, action 0 pays 1 and action 1 pays 0, and in $\mu_1$ the rewards are reversed. Give the two deterministic environments equal prior mass, so both mixture action values are $1/2$. Fix the exact planner's tie-breaking rule to choose action 0. Let $\rho$ agree with $\xi$ on action 0 and predict $\Ber(1/2+\delta)$ on action 1, where $0<\delta<1/2$. Under uniform behavior,
\[
\eta=-\frac14\log_2(1-4\delta^2)\longrightarrow0.
\]
Both deterministic policies have coverage coefficient 2. The learned planner chooses action 1, losing 1 in $\mu_0$ and gaining 1 in $\mu_1$ relative to the specified exact planner, and its Bayesian value loss is zero. Hence no task-wise policy-loss bound tending to zero with $\eta$ follows from the Bayesian bound alone, even with fixed positive task weights and finite coverage.
\end{example}

\begin{proposition}[Both terms of the coverage-failure bound are necessary]\label{prop:tailnecessary}
\begin{enumerate}[label=(\roman*),leftmargin=*,itemsep=1pt]
\item For every $q\in(0,1)$ there is a one-round instance with $\eta=0$, $\kappa_\pi=\infty$, $\delta_\pi(c)=q$ for every $c\ge1$, and $|V_\xi^\pi-V_\rho^\pi|=q$.
\item In Example~\ref{ex:sqrt}, $\delta_\pi(c)=0$ for both policies at $c=1/\epsilon$, while the learned planner loses $\Theta(\sqrt{\kappa\eta})$.
\end{enumerate}
\end{proposition}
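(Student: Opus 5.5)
For part (i), the plan is to reuse the structure of Example~\ref{ex:coverage} with a reward gap of size $q$. Take $H=1$, a singleton class so that $\xi$ is the true model, and two actions, safe and explore. Let the behavior policy $\beta$ always choose safe, and let $\rho$ agree with $\xi$ on safe, so that $\eta=0$ by the chain rule. On explore, let $\xi$ emit reward $1$ surely, while $\rho$ emits reward $0$ surely. Take the deployment policy $\pi$ to choose explore with probability $q$ and safe otherwise.

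Then $d_{\xi,1}^\pi(\emptyset,\text{explore})=q>0=c\,d_{\xi,1}^\beta(\emptyset,\text{explore})$ for every $c\ge1$, so this pair lies in $\mathcal B_{\pi,c}$ and $\kappa_\pi=\infty$. The safe pair has ratio $(1-q)/1\le1\le c$, so it is never bad. Hence $\delta_\pi(c)$ equals the probability of choosing explore, namely $q$, for every $c\ge1$.

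For the value gap, give safe the same reward law (say reward $0$) under both models. Then $V_\xi^\pi-V_\rho^\pi=q(1-0)=q$. This shows that the $\delta_\pi(c)$ term in \eqref{eq:tailvalue} cannot be dropped: the square-root term vanishes since $\eta=0$, and the remaining bound $\delta_\pi(c)=q$ is attained exactly.

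For part (ii), I would compute the coverage ratios in Example~\ref{ex:sqrt} directly. With $H=1$ there is only the empty history. The deterministic risky policy has ratio $1/\epsilon$ at risky and ratio $0$ at safe. The deterministic safe policy has ratio $1/(1-\epsilon)\le 2\le1/\epsilon$, using $\epsilon\le1/2$. So at $c=1/\epsilon$ no pair satisfies the strict inequality defining $\mathcal B_{\pi,c}$, and $\delta_\pi(c)=0$ for both policies. The example already shows that the learned planner loses $\Delta=\Theta(\sqrt{\kappa\eta})$ with $\kappa=1/\epsilon$.

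A bound of the form $\delta_\pi(c)+o(\sqrt{c\eta})$ would therefore fail in this instance. So the square-root term cannot be replaced by anything of smaller order, in particular not by a linear $c\eta$ term.

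The only point needing care is the strict inequality in the definition of $\mathcal B_{\pi,c}$, which makes the ratio $1/\epsilon$ not count as bad at $c=1/\epsilon$. Apart from that, both parts are direct verifications.
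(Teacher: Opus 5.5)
Your proposal is correct and matches the paper's proof. Part (i) uses the same construction: one environment, behavior always safe, and an uncovered action chosen with probability $q$ that pays one under $\xi$ and zero under $\rho$. Part (ii) is the same check that both coverage ratios in Example~\ref{ex:sqrt} are at most $1/\epsilon$, so the strict inequality leaves the bad set empty at $c=1/\epsilon$.
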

\begin{proof}
For (i), take one environment and let $\beta$ always choose a safe action. Let $\pi$ choose an uncovered action with probability $q$. Both models pay zero after the safe action, while the uncovered action pays one under $\xi$ and zero under $\rho$. Then $\eta=0$, $\kappa_\pi=\infty$, the bad set contains exactly the uncovered pair for every $c\ge1$, and $|V_\xi^\pi-V_\rho^\pi|=q$. Part (ii) restates Example~\ref{ex:sqrt}, whose coefficients are at most $1/\epsilon$.
\end{proof}
An arbitrarily small probability of entering an uncovered branch therefore permits an equally small, but nonzero, value error, and the square-root dependence on $c\eta$ cannot be improved without a decision gap.

\subsection{A positive policy-value gap}
A margin must be defined on a policy class in which a positive gap is possible. Over all randomized policies, mixing an optimal and a suboptimal policy gives arbitrarily small positive value gaps.

\begin{proposition}[Gap-dependent planning bound]\label{prop:gap}
Let $\Pi$ be a finite nonempty policy class and let $\Pi^\star=\arg\max_{\pi\in\Pi}V_\xi^\pi$. Choose $\pi^\star\in\Pi^\star$ and assume $\Pi\setminus\Pi^\star$ is nonempty. Define
\[
\Delta_\Pi=\min_{\pi\in\Pi\setminus\Pi^\star}(V_\xi^{\pi^\star}-V_\xi^\pi)>0.
\]
Assume $\sup_{\pi\in\Pi}\kappa_\pi\le\kappa<\infty$ and let $\widehat\pi\in\arg\max_{\pi\in\Pi}V_\rho^\pi$. Under \eqref{eq:trainkl},
\begin{equation}
V_\xi^{\pi^\star}-V_\xi^{\widehat\pi}
\le\min\left\{1,\sqrt{2\ln2\,\kappa\eta},
\frac{2\ln2\,\kappa\eta}{\Delta_\Pi}\right\}.\label{eq:gap}
\end{equation}
If $\eta<\Delta_\Pi^2/(2\ln2\,\kappa)$, then $\widehat\pi\in\Pi^\star$.
\end{proposition}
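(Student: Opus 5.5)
The plan is to prove the three branches of the minimum in \eqref{eq:gap} separately, and then read off the identification claim from the gap branch. The first two branches come directly from Theorem~\ref{thm:planning} with $\zeta=0$. Since $\widehat\pi$ maximizes $V_\rho$ over $\Pi$, it is an exact planner in the class. Both $\pi^\star$ and $\widehat\pi$ lie in $\Pi$, so their coefficients are at most $\kappa$. The theorem then gives $V_\xi^{\pi^\star}-V_\xi^{\widehat\pi}\le\min\{1,\sqrt{2\ln2\,\kappa\eta}\}$. The finiteness of $\Pi$ guarantees that $\arg\max_{\pi\in\Pi}V_\rho^\pi$ is nonempty, so $\widehat\pi$ exists.

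The gap branch rests on a dichotomy. Write $\epsilon_0=\sqrt{(\ln2/2)\kappa\eta}$. Proposition~\ref{prop:transfer} and Lemma~\ref{lem:l2} give $|V_\xi^\pi-V_\rho^\pi|\le\epsilon_0$ for every $\pi\in\Pi$, exactly as in the proof of Theorem~\ref{thm:planning}. Combining this with $V_\rho^{\widehat\pi}\ge V_\rho^{\pi^\star}$ yields
\[
V_\xi^{\pi^\star}-V_\xi^{\widehat\pi}\le2\epsilon_0=\sqrt{2\ln2\,\kappa\eta}.
\]
If $\widehat\pi\in\Pi^\star$, the loss is $0$ and every branch holds trivially.

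Otherwise $\widehat\pi\notin\Pi^\star$, and the loss is at least $\Delta_\Pi$ by the definition of $\Delta_\Pi$. The loss $L$ then satisfies $\Delta_\Pi\le L\le2\epsilon_0$. This gives
\[
L\le\frac{L^2}{\Delta_\Pi}\le\frac{(2\epsilon_0)^2}{\Delta_\Pi}=\frac{2\ln2\,\kappa\eta}{\Delta_\Pi}.
\]
In both cases $L\le 2\ln2\,\kappa\eta/\Delta_\Pi$, which proves \eqref{eq:gap}.

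For the final claim, suppose $\eta<\Delta_\Pi^2/(2\ln2\,\kappa)$. Then $2\epsilon_0=\sqrt{2\ln2\,\kappa\eta}<\Delta_\Pi$. If $\widehat\pi$ were suboptimal, we would have $\Delta_\Pi\le L\le2\epsilon_0<\Delta_\Pi$, a contradiction, so $\widehat\pi\in\Pi^\star$.

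No step is a real obstacle. The only point needing care is that the case split gives a linear rate in $\eta$ purely from the squaring trick $L\le L^2/\Delta_\Pi$, which is valid because $L\ge\Delta_\Pi>0$ in the nontrivial case. It is also worth noting that the positivity of $\Delta_\Pi$ uses the finiteness of $\Pi$ and the nonemptiness of $\Pi\setminus\Pi^\star$.
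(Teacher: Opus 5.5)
Your proof is correct and follows essentially the same route as the paper. The paper also takes the bound $g\le\sqrt{2\ln2\,\kappa\eta}$ from Theorem~\ref{thm:planning}, uses $g>0\Rightarrow g\ge\Delta_\Pi$ to get $g\le g^2/\Delta_\Pi$, and reads off that positive regret is impossible when $\sqrt{2\ln2\,\kappa\eta}<\Delta_\Pi$.
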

\begin{proof}
Write $g=V_\xi^{\pi^\star}-V_\xi^{\widehat\pi}$ and $b=\sqrt{2\ln2\,\kappa\eta}$. Theorem~\ref{thm:planning} gives $g\le b$. If $g>0$, then $g\ge\Delta_\Pi$, so $g\le g^2/\Delta_\Pi\le b^2/\Delta_\Pi$. If $b<\Delta_\Pi$, positive regret is impossible.
\end{proof}
The conclusion preserves the optimal value rather than a particular policy among ties, and it is an algebraic corollary of the value bound rather than a finite-sample fast-rate theorem. More generally, any certified bound $g\le B$ gives $g\le\min\{B,B^2/\Delta_\Pi\}$ and preserves the optimal value when $B<\Delta_\Pi$, so one may use $B=\zeta+b_{\pi^\star}+b_{\widehat\pi}$ from Theorem~\ref{thm:rarecoverage} when its planning conditions hold.

\section{Application to approximate mentor deferral}\label{sec:deferral}
Golden Handcuffs motivates a concrete approximation question, namely whether a learned predictor preserves the decision to yield to a mentor. In its original formulation, the subjective reward range includes $-L$, and a sufficiently low optimal Bayesian value triggers an override \citep[Section~4]{ebtekarcohen2026}. The following corollary isolates threshold stability in our normalized finite-horizon setting. It is a consequence of the value bound rather than a new safety algorithm.

\begin{corollary}[A margin for preserving a deferral decision]\label{cor:deferral}
Assume \eqref{eq:trainkl}. Fix a nonempty policy class $\Pi$ and a shared return $G(\tau)\in[-L,1]$ with $L\ge0$. Define
\[
F_M=\sup_{\pi\in\Pi}\E_{P_M^\pi}G,
\qquad b=\sup_{\pi\in\Pi}b_\pi,
\]
where $b_\pi$ is the trajectory-TV certificate in Theorem~\ref{thm:rarecoverage}. Let a learned planning score satisfy $F_\rho-\zeta\le\widehat F_\rho\le F_\rho$, with $\zeta\ge0$ measured in units of $G$. Then
\begin{equation}
F_\xi-(L+1)b-\zeta\le\widehat F_\rho\le F_\xi+(L+1)b.\label{eq:deferralinterval}
\end{equation}
For any threshold $\theta\in\mathbb R$, the rules ``defer if $F_\xi\le\theta$'' and ``defer if $\widehat F_\rho\le\theta$'' agree whenever
\begin{equation}
|F_\xi-\theta|>(L+1)b+\zeta.\label{eq:deferralmargin}
\end{equation}
The conservative rule ``defer if $\widehat F_\rho\le\theta+(L+1)b$'' always defers whenever $F_\xi\le\theta$.
\end{corollary}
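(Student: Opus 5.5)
The plan is to reduce everything to a uniform value-continuity bound for the rescaled return and then read off the three conclusions algebraically.

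\emph{Step 1 (per-policy value bound).} For $G\in[-L,1]$, write $G=(L+1)R-L$ with $R=(G+L)/(L+1)\in[0,1]$. This $R$ is a fixed shared return functional, so Theorem~\ref{thm:rarecoverage} applies to it. For every $\pi\in\Pi$ and every $c\ge1$ it gives $|\E_{P_\xi^\pi}G-\E_{P_\rho^\pi}G|=(L+1)|V_\xi^\pi-V_\rho^\pi|\le(L+1)\TV(P_\xi^\pi,P_\rho^\pi)$. The TV factor is bounded by the minimum of $1$ and $\delta_\pi(c)+\sqrt{(\ln2)c\eta/2}$. Taking the infimum over $c$ shows the TV is at most $b_\pi\le b$. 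Equivalently, one can bound $|\E_P G-\E_Q G|\le(\sup G-\inf G)\TV(P,Q)$ directly, with range $L+1$. If $b$ is infinite or the bound is trivial, nothing changes, since $b\le1$ by definition of $b_\pi$.

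\emph{Step 2 (sup over the class).} A uniform pointwise bound $|f(\pi)-g(\pi)|\le\varepsilon$ on $\Pi$ gives $|\sup f-\sup g|\le\varepsilon$. Both suprema are finite because $G$ is bounded and $\Pi$ is nonempty. Hence $|F_\xi-F_\rho|\le(L+1)b$. Combining this with $F_\rho-\zeta\le\widehat F_\rho\le F_\rho$ yields \eqref{eq:deferralinterval}. No attainment of either supremum is needed, which is why I work with suprema rather than with optimal policies as in Theorem~\ref{thm:planning}.

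\emph{Step 3 (decisions).} Split into two cases.
\begin{itemize}
\item If $F_\xi>\theta+(L+1)b+\zeta$, the left side of \eqref{eq:deferralinterval} gives $\widehat F_\rho>\theta$, so neither rule defers.
\item If $F_\xi<\theta-(L+1)b-\zeta$, the right side gives $\widehat F_\rho\le F_\xi+(L+1)b<\theta$, so both rules defer. This case only needs margin $(L+1)b$, so the stated margin suffices.
\end{itemize}
For the conservative rule, $F_\xi\le\theta$ together with the upper bound $\widehat F_\rho\le F_\xi+(L+1)b$ gives $\widehat F_\rho\le\theta+(L+1)b$, so it defers.

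\emph{Main obstacle.} The step needing care is Step 1: confirming that Theorem~\ref{thm:rarecoverage}, stated for $R\in[0,1]$, transfers to the affine rescaling with exactly the factor $L+1$. One must also check that $b_\pi$ does not depend on the return functional, since it depends only on $\xi$, $\rho$, $\beta$, $\pi$ and $\eta$. Everything after Step 1 is inequality bookkeeping.
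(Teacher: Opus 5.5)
Your proof is correct and follows the paper's argument essentially step for step: you rescale to $(G+L)/(L+1)\in[0,1]$, apply Theorem~\ref{thm:rarecoverage} to each policy to get $(L+1)b_\pi\le(L+1)b$, and pass to suprema without needing attainment; the interval, the two-sided margin, and the conservative rule then follow directly. Your remark that the deferring case needs only margin $(L+1)b$ is a harmless refinement of the paper's symmetric condition.
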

\begin{proof}
Apply Theorem~\ref{thm:rarecoverage} to $(G+L)/(L+1)\in[0,1]$. For every $\pi\in\Pi$, the value difference for $G$ is at most $(L+1)b_\pi\le(L+1)b$. Taking suprema gives $|F_\xi-F_\rho|\le(L+1)b$ without requiring attainment. Combining this with the interval for $\widehat F_\rho$ proves \eqref{eq:deferralinterval}. A perturbation of magnitude at most $(L+1)b+\zeta$ cannot cross a threshold farther away than that amount. Finally, if $F_\xi\le\theta$, the upper bound in \eqref{eq:deferralinterval} gives $\widehat F_\rho\le\theta+(L+1)b$.
\end{proof}

The factor $L+1$ makes a relevant sensitivity explicit. At a fixed probability-law certificate, a larger subjective loss range permits a larger value error. This does not show that increasing $L$ worsens an implemented agent's safety, since it also changes the values, the margins, and possibly the chosen policies. The uniform certificate over $\Pi$ can be demanding, because it must control every policy that matters to the optimized score, and it may be vacuous if the class contains poorly covered candidates. Section~\ref{sec:joint} shows where such certificates are hardest to obtain. Safety-critical histories may be rare under training. Theorem~\ref{thm:universal} shows that task rarity and policy rarity can then compound in the prediction error.

The corollary concerns one decision under the stated trajectory laws. To apply it after observations, we must update the mixture posterior and localize the training guarantee. The next proposition gives the required conditional budget. It is a KL chain-rule consequence, not an assumption that every history inherits the same error bound.

\subsection{Conditional continuation guarantees}\label{sec:continuation}
For $h=h_{<t}$ with $1\le t\le H$, write $P_{M,h}^{\pi}$ for the continuation law from rounds $t$ through $H$ under the kernels of $M$ and $\pi$, starting at $h$. This law is defined even if $\pi$ would not have reached $h$. Define
\[
p_\beta(h)=P_\xi^\beta(\mathsf H_{<t}=h),\qquad
\eta_h=\KL(P_{\xi,h}^{\beta}\Vert P_{\rho,h}^{\beta})
\]
when $p_\beta(h)>0$. The continuation of $\xi$ uses the updated weights $w_\mu(h)$ from \eqref{eq:posterior}.

\begin{proposition}[Localization of the training guarantee]\label{prop:continuation}
Under \eqref{eq:trainkl}, every history with $p_\beta(h)>0$ satisfies
\begin{equation}
\eta_h\le\frac{\eta}{p_\beta(h)}.\label{eq:conditionalbudget}
\end{equation}
More generally, for any prefix-free collection $\mathcal S$ of histories $h_{<t}$ with $1\le t\le H$,
\begin{equation}
\sum_{h\in\mathcal S:\,p_\beta(h)>0}p_\beta(h)\eta_h\le\eta.\label{eq:stoppedbudget}
\end{equation}
Here prefix-free means that no member is a proper prefix of another.

For a continuation policy $\pi$, define $d_{M,s}^{\pi\mid h}(g,a)$ from $P_{M,h}^{\pi}$ for $s=t,\ldots,H$. Let $\delta_{\pi,h}(c)$ be the $P_{\xi,h}^{\pi}$ probability of encountering a pair satisfying
\[
d_{\xi,s}^{\pi\mid h}(g,a)>c\,d_{\xi,s}^{\beta\mid h}(g,a).
\]
Then, for every $c\ge1$,
\begin{equation}
\TV(P_{\xi,h}^{\pi},P_{\rho,h}^{\pi})
\le\min\left\{1,\ \delta_{\pi,h}(c)+
\sqrt{\frac{\ln2}{2}c\eta_h}\right\}
\le\min\left\{1,\ \delta_{\pi,h}(c)+
\sqrt{\frac{\ln2}{2}\frac{c\eta}{p_\beta(h)}}\right\}.\label{eq:conditionaltv}
\end{equation}
\end{proposition}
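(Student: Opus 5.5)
The approach is to write the training KL through the chain rule as a sum of local divergences and then split that sum according to which histories pass through $h$. With $k_s(g,a)=\KL(\xi(\cdot\mid g,a)\Vert\rho(\cdot\mid g,a))$ as in Proposition~\ref{prop:transfer}, the chain rule and cancellation of the common policy factors give
\[
\KL(P_\xi^\beta\Vert P_\rho^\beta)=\sum_{s,g,a}d_{\xi,s}^\beta(g,a)k_s(g,a).
\]

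The first step is the continuation law. For $h=h_{<t}$ with $p_\beta(h)>0$, the continuation law $P_{\xi,h}^\beta$ is the conditional law of rounds $t,\dots,H$ given $\mathsf H_{<t}=h$. The continuation of $\xi$ uses the posterior weights $w_\mu(h)$, and \eqref{eq:causalmixture} applied to the posterior weights reproduces exactly the conditional kernels $\xi(\cdot\mid g,a)$ at extensions $g$ of $h$. Therefore $d_{\xi,s}^{\beta\mid h}(g,a)=d_{\xi,s}^\beta(g,a)/p_\beta(h)$ for every $g$ extending $h$, and the same chain rule applied to the continuation gives
\[
\eta_h=\frac{1}{p_\beta(h)}\sum_{s\ge t}\ \sum_{g\succeq h,\,a}d_{\xi,s}^\beta(g,a)\,k_s(g,a).
\]

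Next, take a prefix-free collection $\mathcal S$. The sets of pairs $(s,g,a)$ with $g$ extending some $h\in\mathcal S$ are disjoint across members, because if one pair extended two members, one member would be a prefix of the other. Summing $p_\beta(h)\eta_h$ over $\mathcal S$ therefore counts each nonnegative term of the training chain-rule sum at most once, which gives \eqref{eq:stoppedbudget}. The singleton $\mathcal S=\{h\}$ gives \eqref{eq:conditionalbudget}. Finite training KL makes every local term finite wherever training occupancy is positive, so no $\infty-\infty$ issues arise. Zero-occupancy pairs are simply omitted, as in Proposition~\ref{prop:transfer}.

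For \eqref{eq:conditionaltv}, rerun the proof of Theorem~\ref{thm:rarecoverage} verbatim on the continuation problem. This problem is itself a finite-horizon causal setup: rounds $t,\dots,H$, the initial history fixed to $h$, reference model $\xi$ continued from $h$ (with $\eta$ replaced by $\eta_h$), learned model $\rho$, and behavior $\beta$ continued from $h$. The switching construction, with bad set defined by the conditional occupancy ratios, yields the first inequality. Substituting \eqref{eq:conditionalbudget} yields the second.

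The main obstacle is verifying that the continuation of $\xi$ with updated weights really is a causal mixture in the sense of Section~\ref{sec:setup}. That is, it must satisfy the analogue of \eqref{eq:causalmixture}, with its local kernels equal to those of $\xi$ at extensions of $h$, so that both the chain rule and the mixture-based steps of Theorem~\ref{thm:rarecoverage} apply. I would check this by telescoping the posterior updates from $h$ onward. I would also note that Theorem~\ref{thm:rarecoverage} never used that $\xi$ arises from a prior; it used only that $\xi$ is a causal model together with \eqref{eq:trainkl}.
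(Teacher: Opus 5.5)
Your proposal is correct and follows the paper's proof essentially step for step. Both arguments use the same three ingredients: the conditional chain rule expressing $p_\beta(h)\eta_h$ as the sum of $d_{\xi,s}^\beta(g,a)k_s(g,a)$ over pairs extending $h$; disjointness of these descendant sets for a prefix-free $\mathcal S$; and Theorem~\ref{thm:rarecoverage} rerun on the continuation tree with budget $\eta_h$. The obstacle you flag is lighter than you suggest. The continuation law is built from the same kernels $\xi(\cdot\mid g,a)$, so the product form \eqref{eq:traj} shows directly that it equals the conditional law given $\mathsf H_{<t}=h$, without any appeal to the mixture identity.
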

\begin{proof}
Let $k_s(g,a)=\KL(\xi(\cdot\mid g,a)\Vert\rho(\cdot\mid g,a))$. The conditional chain rule gives
\[
p_\beta(h)\eta_h
=\sum_{s=t}^H\ \sum_{\substack{g\text{ extends }h\\a\in\mathcal A}}
d_{\xi,s}^{\beta}(g,a)k_s(g,a).
\]
The sum includes the current pair $g=h$ at $s=t$, and zero-occupancy terms are omitted. All positive-occupancy terms are finite because training KL is finite. For a prefix-free collection, the descendant pairs in these sums are disjoint. Their sum is therefore at most the full training KL, proving \eqref{eq:stoppedbudget} and its singleton case \eqref{eq:conditionalbudget}. Apply Theorem~\ref{thm:rarecoverage} on the continuation tree with training budget $\eta_h$, then substitute \eqref{eq:conditionalbudget}.
\end{proof}

The histories at which a deterministic history-dependent rule first requests a continuation certificate form such a prefix-free collection. Equation~\eqref{eq:stoppedbudget} therefore permits this form of adaptive selection. It does not permit adding the same budget bound over repeated requests at nested histories.

The factor $1/p_\beta(h)$ cannot be removed. Take two rounds and one action. Both models emit $\Ber(p)$ in round one, with $p\in(0,1)$. After percept $1$, let $\xi$ emit $0$ surely and let $\rho$ assign probability $2^{-d}$ to $0$, for any $d>0$. After percept $0$, both emit $0$ surely. With a singleton true environment, the episode KL is exactly $pd$, while continuation KL at history $1$ is $d$. For $d=1$, the conditional value error for terminal return $\mathbf1[e_2=0]$ is $1/2$, even as the episode KL tends to zero with $p$.

To apply Corollary~\ref{cor:deferral} at $h$, define
\[
b_{\pi,h}=\min\left\{1,\inf_{c\ge1}
\left[\delta_{\pi,h}(c)+\sqrt{\frac{\ln2}{2}c\eta_h}\right]\right\},
\qquad b_h=\sup_{\pi\in\Pi_h}b_{\pi,h}.
\]
For a nonempty continuation-policy class $\Pi_h$ and a shared return $G_h\in[-L,1]$, set $F_M(h)=\sup_{\pi\in\Pi_h}\E_{P_{M,h}^\pi}G_h$. If $F_\rho(h)-\zeta_h\le\widehat F_\rho(h)\le F_\rho(h)$ with $\zeta_h\ge0$, the same proof gives
\[
F_\xi(h)-(L+1)b_h-\zeta_h\le\widehat F_\rho(h)
\le F_\xi(h)+(L+1)b_h.
\]
Thus $|F_\xi(h)-\theta(h)|>(L+1)b_h+\zeta_h$ preserves the deferral decision at $h$. When only the episode budget is available, \eqref{eq:conditionaltv} gives the weaker certificate with $\eta/p_\beta(h)$. Histories with zero training probability have no such guarantee.

\subsection{Agreement of overrides does not imply event avoidance}
Even conditional certificates control scores rather than the actions taken when neither rule defers.

\begin{example}[Identical deferral decisions with different task-wise safety]\label{ex:deferralsafety}
Use Example~\ref{ex:ties}, take $G$ to be the one-round reward, and set the deferral threshold to zero. Both optimal values are positive: $F_\xi=1/2$ and $F_\rho=1/2+\delta$. Neither planner defers. In $\mu_0$, the specified ideal planner takes action $0$ and obtains reward $1$, while the learned planner takes action $1$ and obtains reward $0$. If failure is the event that reward equals zero, its probability is zero under the ideal policy and one under the learned policy. The deferral decisions agree although training KL tends to zero and both policies have coverage coefficient two.
\end{example}

This is a counterexample to inferring task-wise safety from threshold agreement. It is not a counterexample to Golden Handcuffs under its own assumptions. An extension to that agent still requires its semimeasure and discounting conventions, repeated mentor selection, and a guarantee for actions between overrides.

There is a further distinction between controlling risk estimates and excluding failures. For any finite-horizon event $\mathcal C'$, Theorem~\ref{thm:rarecoverage} implies
\[
|P_\xi^\pi(\mathcal C')-P_\rho^\pi(\mathcal C')|\le b_\pi.
\]
This compares two models under the same policy, with the Bayesian mixture as reference law. It does not compare the failure probabilities of two different policies in the true task, and a nonzero additive bound is not an event-avoidance guarantee. A useful next step is to identify conditions that preserve both the override and the safety of the actions permitted between overrides.

\section{Implications for learned agents}\label{sec:implications}
The readings below hold inside the normalized finite-horizon model, under assumption~\eqref{eq:trainkl} and the coverage conditions of the cited results. Each one states what that result implies for evaluation or deferral. None of them explains the behavior of a trained network. The bounds are worst-case transfer statements, and Appendix~\ref{sec:allocation} notes that they do not imply a gain from moving data between sources. Proposition~\ref{prop:context} gives the excess log loss of a discarded context. Proposition~\ref{prop:gap} makes the tolerable error depend on the value gap, and Example~\ref{ex:sqrt} shows that a vanishing gap leaves a square-root loss.

\paragraph{Rare histories.}
Proposition~\ref{prop:continuation} gives every history $h$ with $p_\beta(h)>0$ a continuation budget $\eta_h\le\eta/p_\beta(h)$. On a prefix-free collection the same training KL is shared across histories. A two-round example attains $\eta=p_\beta(h)\eta_h$, so this factor is tight. If only the episode budget is known, the conditional total-variation bound scales as $1/\sqrt{p_\beta(h)}$ before it saturates at one. Within this model, reliability after an unusual intermediate state is controlled by the conditional budget at that history, rather than by the episode KL alone.

\paragraph{Deferral.}
Corollary~\ref{cor:deferral} makes the ideal deferral rule and the learned rule agree only when $|F_\xi-\theta|>(L+1)b+\zeta$. The required separation scales with the width $L+1$ of the return range $[-L,1]$. At an observed history, the certificate from Proposition~\ref{prop:continuation} has an episode-budget form that scales as $1/\sqrt{p_\beta(h)}$ before saturating at one. Example~\ref{ex:deferralsafety} keeps both rules from deferring as the training KL tends to zero, with failure probability zero under the ideal action and one under the learned action. Within this model, a value-based override built on $\rho$ needs an explicit margin, such as the conservative shift in the corollary. Agreement with an ideal override rule does not give event avoidance.

\paragraph{Logged trajectories.}
Proposition~\ref{prop:transfer} carries training KL from the behavior policy $\beta$ to a deployment policy only through the occupancy ratio $\kappa_\pi$. Theorems~\ref{thm:planning} and~\ref{thm:rarecoverage} bound the value loss of a policy that planning selects under $\rho$. In Example~\ref{ex:sequence} that policy selects the one action string on which $\rho$ is wrong, and the episode KL equals $2^{-L}(\log_2 3)/2$. Example~\ref{ex:coverage} has $\eta=0$ when $\beta$ stays on the safe action, and the learned planner then loses $2/5$. Within this model, loss under $\beta$ does not certify the branches a planner may select, including branches that are rare under $\beta$. The passive form of this gap is the rarity penalty of \citet{liteng2026}, in which a small average KL can hide a large error on a rare environment.

\paragraph{Joint rarity.}
Proposition~\ref{prop:evidence} factors $\Gamma(\mu,\pi;\beta)$ into a policy likelihood ratio and the task evidence $\lambda_\mu(h)$ available before the prediction. If the task is unidentified on every evaluated history, Corollary~\ref{cor:extremes} gives $\Gamma\le\kappa_\pi$, with no factor $1/w_\mu$. If the posterior equals one on a branch that attains $\kappa_\pi$, the same corollary gives the product $\kappa_\pi/w_\mu$. Example~\ref{ex:laterevelation} puts the revealing percept after the rare action and still has $\Gamma=1/(w\epsilon)$, because a later prediction meets both factors. Theorem~\ref{thm:universal} produces excess error of order $\kappa_\pi\eta\,2^{K(\mu)}$ for infinitely many computable environments inside one fixed universal mixture. Within this model, a larger task weight leaves the policy ratio unchanged on branches that $\beta$ rarely visits, and those branches pay the product once the task is identifiable.

\paragraph{Separate data budgets.}
Proposition~\ref{prop:allocation} bounds the coverage of a rollout policy $\pi_j$ by $\kappa_{\pi_j}\le 1/\gamma_j$, where $\gamma_j$ is the probability of following $\pi_j$ for a whole episode. The family bound~\eqref{eq:familyindividual} puts the task weight in the denominator as well, and yields $\Gamma(\mu,\pi_j;\beta)\le 1/(\lambda q_\mu\gamma_j)$. Raising $\lambda$ puts more episodes on that family of tasks. Only raising some $\gamma_j$ changes the action sequences inside those episodes. Within this model, further tasks do not close a coverage gap left by a single behavior policy. A rollout under another policy counts through its $\gamma_j$ whether or not its return is high.

\section{Scope, limitations, and open problems}\label{sec:scope}
\paragraph{Learned predictors.}
The bounds hold for any response-model architecture, and whether a particular network attains \eqref{eq:trainkl} is a separate learning question. The ideal mixture supplies a reference distribution. Applying the planning results also requires a suitable policy class, a bound on planning error $\zeta$, and the shared return of Section~\ref{sec:planning}. A learned evaluator that differs from that return adds an error outside these bounds. Coverage refers to full histories, so context compression does not justify replacing it by coverage on compressed states.

Finite-transducer testbeds can make the assumptions explicit. With a finite task class, rational transition probabilities, rational prior and policy weights, rational returns, and a finite horizon, occupancies and policy values are exactly evaluable. Backward induction computes the optimal value over unrestricted policies. Finite KL values can be approximated to arbitrary precision, and support failure is decidable in this rational setting. Such testbeds make it possible to measure how close trained predictors come to the worst cases identified here, although a finite testbed cannot by itself certify the universal setting.

\paragraph{Normalized models and AIXI.}
The proofs use probability vectors, normalized trajectory laws, and a fixed bounded return, matching the sequence-measure mixtures of \citet[Section~2]{liteng2026}. Raw Solomonoff semimeasures can lose mass, so the proofs do not automatically apply to them. An AIXI-oriented extension must specify its causal semimeasure class, its treatment of termination or missing mass, its reward convention, and its horizon or discounting. Adding an absorbing termination percept is one possible modeling choice, but it changes the formal environment and requires checking the mixture and domination assumptions again. The analysis of \citet{wyethhutter2025ignorance} shows that this choice has decision-theoretic content, because assigning utility to terminated histories and treating missing mass as ignorance lead to different constructions. Section~\ref{sec:universal} sidesteps the issue by normalizing over estimable environments, which are almost surely productive.

\paragraph{Prediction and the control comparator.}
The planning theorems compare a learned-model policy with the optimal Bayesian policy in the same class. They do not compare either with a clairvoyant policy that knows $\mu$. Predictive convergence on visited histories does not establish the value of unvisited actions, and the choice of universal prior can substantially affect behavior \citep{leikehutter2015}. \citet{leike2016} prove asymptotic optimality in mean for an exploratory Thompson-sampling agent in general environments, and sublinear regret under recoverability and suitable discount-function assumptions. Combining them with the present fixed-policy bounds to obtain an asymptotic control theorem is a natural direction.

\paragraph{Horizon and data.}
Normalization removes an explicit reward-sum factor from the planning bound, but $\eta$, $\kappa_\pi$, and $\Gamma$ can still grow with $H$, and Example~\ref{ex:sequence} gives exponential coverage growth. The clipped and coverage-failure bounds avoid requiring a finite maximum ratio, although their residual quantities can also grow with $H$. Finite-sample estimates of \eqref{eq:trainkl} and of coverage require separate assumptions, especially for adaptive data. Appendix~\ref{app:computability} shows why one fixed uniformly computable predictor cannot have bounded population trajectory KL at all horizons against a normalized mixture containing every computable deterministic sequence. This obstruction complements the computability classifications of \citet{leikehutter2018} and does not contradict the restricted-source guarantees for speed priors in \citet{filan2016}.

\paragraph{Open problems.}
Four questions remain. First, Corollary~\ref{cor:computablelower} supplies computable witnesses without resource bounds. Which restrictions on runtime, representation size, or the learning procedure alter the worst-case penalty? Second, $\Gamma$ is optimal for arbitrary local losses but may overstate the cost for a particular return. Loss-sensitive coverage quantities could improve that comparison.

Third, Proposition~\ref{prop:continuation} controls conditional scores, but repeated overrides still require control of the actions taken between them. The safety counterexample shows why score agreement alone is insufficient. Fourth, an AIXI extension must specify the semimeasure and value conventions discussed above. These questions concern additional structure beyond the current population KL assumption.

\section{Conclusion}
When a learned predictor supplies an agent's world model, the agent's actions change where prediction error is evaluated. The joint occupancy coefficient separates the policy likelihood ratio from task evidence available before each prediction. Indistinguishability throughout the relevant histories removes the prior-mass multiplier. In contrast, one fixed universal causal mixture admits a matching multiplicative lower bound on infinitely many computable environments. Finite-horizon rational approximation preserves this order for computable predictors, so computability alone does not remove the penalty.

For bounded-return planning, a coverage-failure probability can replace an infinite maximum ratio. Conditional continuation bounds quantify the extra cost of assessing a rare observed history. These results give margins for preserving mentor-deferral decisions, while an explicit counterexample separates those decisions from task-wise event avoidance. Section~\ref{sec:implications} collects the consequences for evaluation and deferral that follow inside the model. Because the guarantees are stated for population laws, they apply to any agent that meets their assumptions. Verifying those assumptions for trained agents and extending the analysis to the semimeasure-based decisions of AIXI are natural next steps.

\bibliographystyle{plainnat}
\bibliography{references}

\appendix
\section{Supporting results}\label{app:additional}
This appendix collects results that refine or complement the main analysis. They use the setup of Section~\ref{sec:setup}.

\subsection{The prediction cost of discarded context}\label{sec:context}
A finite context window or deterministic memory representation can prevent a response model from retaining information needed for Bayes-optimal prediction. The following conditional-information identity makes this source of approximation error explicit. It concerns the information retained by the representation, independently of how a network implements the resulting predictor.

Let $\mathsf H_{<t},\mathsf A_t,\mathsf E_t$ be the random history, action, and percept under $P_\xi^\beta$. Fix a deterministic representation $U_t=\phi_t(\mathsf H_{<t})$, with time available to the predictor, and write
\[
\bar\xi_t^\beta(e\mid u,a)
=P_\xi^\beta(\mathsf E_t=e\mid U_t=u,\mathsf A_t=a).
\]
Choose arbitrary normalized conditionals at pairs with zero probability, and define
\begin{equation}
\mathcal I_\beta(\phi)
=\sum_{t=1}^H I_2(\mathsf E_t;\mathsf H_{<t}\mid U_t,\mathsf A_t),\label{eq:contextinformation}
\end{equation}
where $I_2$ is conditional mutual information in bits, evaluated under $P_\xi^\beta$.

\begin{proposition}[Context representation and excess log loss]\label{prop:context}
For any normalized predictor of the form $\rho(e\mid h,a)=r_t(e\mid\phi_t(h),a)$,
\begin{align}
\KL(P_\xi^\beta\Vert P_\rho^\beta)
={}&\mathcal I_\beta(\phi)\notag\\
&+\sum_{t=1}^H\E_{P_\xi^\beta}
\KL\!\left(\bar\xi_t^\beta(\cdot\mid U_t,\mathsf A_t)
\Vert r_t(\cdot\mid U_t,\mathsf A_t)\right).\label{eq:contextdecomposition}
\end{align}
Consequently, the smallest trajectory KL over all such predictors equals $\mathcal I_\beta(\phi)$, and Assumption~\eqref{eq:trainkl} requires $\eta\ge\mathcal I_\beta(\phi)$ for this representation class.
\end{proposition}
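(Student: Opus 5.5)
We begin with the common-policy chain rule already used in Proposition~\ref{prop:transfer}. Because $\rho$ and $\xi$ share the policy factors of $\beta$, these cancel, and
\[
\KL(P_\xi^\beta\Vert P_\rho^\beta)=\sum_{t=1}^H\E_{P_\xi^\beta}\log_2\frac{\xi(\mathsf E_t\mid\mathsf H_{<t},\mathsf A_t)}{r_t(\mathsf E_t\mid U_t,\mathsf A_t)}.
\]
The key observation is that under $P_\xi^\beta$ the conditional law of $\mathsf E_t$ given $(\mathsf H_{<t},\mathsf A_t)$ is exactly $\xi(\cdot\mid\mathsf H_{<t},\mathsf A_t)$. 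This holds because the percept kernel of the trajectory law \eqref{eq:traj} is $\xi$, and it holds at every pair of positive occupancy. Since $U_t$ is a deterministic function of $\mathsf H_{<t}$, conditioning on $(U_t,\mathsf A_t)$ yields the coarser kernel $\bar\xi_t^\beta$.

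Next, we insert $\bar\xi_t^\beta(\mathsf E_t\mid U_t,\mathsf A_t)$ into each logarithm and split the ratio:
\[
\log_2\frac{\xi}{r_t}=\log_2\frac{\xi(\mathsf E_t\mid\mathsf H_{<t},\mathsf A_t)}{\bar\xi_t^\beta(\mathsf E_t\mid U_t,\mathsf A_t)}+\log_2\frac{\bar\xi_t^\beta(\mathsf E_t\mid U_t,\mathsf A_t)}{r_t(\mathsf E_t\mid U_t,\mathsf A_t)}.
\]
\begin{itemize}
\item \emph{First term.} Because $U_t$ is determined by $\mathsf H_{<t}$, conditioning on $(\mathsf H_{<t},\mathsf A_t)$ is the same as conditioning on $(\mathsf H_{<t},U_t,\mathsf A_t)$. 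The expectation of the first term is therefore the standard expression for $I_2(\mathsf E_t;\mathsf H_{<t}\mid U_t,\mathsf A_t)$.
\item \emph{Second term.} Its integrand depends on the history only through $(U_t,\mathsf A_t,\mathsf E_t)$. By the tower property, we first average over $\mathsf E_t$ given $(U_t,\mathsf A_t)$, whose law is $\bar\xi_t^\beta$. This gives $\E\,\KL(\bar\xi_t^\beta(\cdot\mid U_t,\mathsf A_t)\Vert r_t(\cdot\mid U_t,\mathsf A_t))$.
\end{itemize}
Summing over $t$ yields \eqref{eq:contextdecomposition}.

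For the consequences, note that the KL term is nonnegative, so every such $\rho$ satisfies $\KL(P_\xi^\beta\Vert P_\rho^\beta)\ge\mathcal I_\beta(\phi)$. Equality holds for $r_t=\bar\xi_t^\beta$, which is a normalized predictor of the required form once we use the arbitrary completion at zero-probability pairs. Hence the minimum equals $\mathcal I_\beta(\phi)$, and \eqref{eq:trainkl} forces $\eta\ge\mathcal I_\beta(\phi)$.

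The main obstacle is the bookkeeping with infinities and zero-probability events. If $r_t$ puts zero mass where $\bar\xi_t^\beta$ is positive, both sides must be infinite. We handle this by noting that the mutual-information terms are finite: each is at most $\log_2|\mathcal E|$ because the alphabet is finite. The splitting of the expectation is therefore legitimate whenever the left side is finite, and when it is infinite the second sum must be infinite as well. We will also check that $\bar\xi_t^\beta>0$ wherever $\xi>0$ on positive-occupancy pairs, so that the first logarithm is finite almost surely. Terms at pairs of zero occupancy are omitted, as in the earlier proofs.
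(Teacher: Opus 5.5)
Your proposal is correct and follows essentially the same argument as the paper: the common-policy chain rule, insertion of the compressed conditional $\bar\xi_t^\beta$ into each log-ratio, identification of the first expected term as $I_2(\mathsf E_t;\mathsf H_{<t}\mid U_t,\mathsf A_t)$ because $U_t$ is a function of the history, and conditioning the second term on $(U_t,\mathsf A_t)$, with $r_t=\bar\xi_t^\beta$ attaining the minimum. Your handling of infinite values is also sound and can be shortened: the trajectory space is finite, and every positive-probability term of the first logarithm is finite, so the split holds directly in the extended sense.
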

\begin{proof}
In the log-likelihood ratio at time $t$, insert the compressed conditional,
\[
\log_2\frac{\xi(\mathsf E_t\mid\mathsf H_{<t},\mathsf A_t)}
{r_t(\mathsf E_t\mid U_t,\mathsf A_t)}
=\log_2\frac{\xi(\mathsf E_t\mid\mathsf H_{<t},\mathsf A_t)}
{\bar\xi_t^\beta(\mathsf E_t\mid U_t,\mathsf A_t)}
+\log_2\frac{\bar\xi_t^\beta(\mathsf E_t\mid U_t,\mathsf A_t)}
{r_t(\mathsf E_t\mid U_t,\mathsf A_t)}.
\]
The first expected term is conditional mutual information because $U_t$ is a function of the history. Conditioning the second on $(U_t,\mathsf A_t)$ gives the corresponding KL term in \eqref{eq:contextdecomposition}. Summing and applying the trajectory chain rule proves the identity, and taking $r_t=\bar\xi_t^\beta$ attains the minimum. If $r_t$ assigns zero mass to an event of positive training probability, both sides are infinite and the identity holds in the extended sense.
\end{proof}

For example, consider a singleton environment, one action, and two rounds with $\mathsf E_1$ a fair bit and $\mathsf E_2=\mathsf E_1$. If $\phi_2$ discards the first percept, then $\mathcal I_\beta(\phi)=1$ bit, while retaining it reduces the minimum KL to zero. Additional training cannot remove the one-bit penalty while that representation is fixed. More generally, zero penalty means that the representation, together with the chosen action, is sufficient for predicting responses on the training distribution. It does not establish sufficiency on histories that the behavior policy never visits, and the compressed conditional itself depends on $\beta$ because compression pools histories according to their training frequencies.

\subsection{Average-case prediction and clipped transfer}
\begin{proposition}[Causal average-case decomposition]\label{prop:variance}
For every common policy $\pi$,
\begin{equation}
\E_{\mu\sim w}S_H^\pi(\mu,\rho)
=\E_{\mu\sim w}S_H^\pi(\mu,\xi)+S_H^\pi(\xi,\rho).\label{eq:variance}
\end{equation}
Under \eqref{eq:trainkl} and finite $\kappa_\pi$, the average excess is at most $(\ln2)\kappa_\pi\eta$.
\end{proposition}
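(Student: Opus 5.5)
The identity \eqref{eq:variance} is a bias–variance (Pythagorean) decomposition carried out pair by pair. First I will expand both sides over $(t,h,a)$. The key observation is that the $w$-average of task occupancies is the mixture occupancy: by \eqref{eq:causalmixture} applied to prefix marginals, $\sum_\mu w_\mu d_{\mu,t}^\pi(h,a)=d_{\xi,t}^\pi(h,a)$. Where this is positive, the ratio $w_\mu d_{\mu,t}^\pi(h,a)/d_{\xi,t}^\pi(h,a)$ is exactly the posterior $w_\mu(h)$ of \eqref{eq:posterior}. The policy factors cancel here, as in the proof of Proposition~\ref{prop:evidence}, since $d_{\mu,t}^\pi(h,a)=L_\mu(h)Q^\pi(h)\pi(a\mid h)$.

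With this, at each pair with $d_{\xi,t}^\pi(h,a)>0$, I will write $p_\mu=\mu(\cdot\mid h,a)$, $x=\xi(\cdot\mid h,a)$ and $r=\rho(\cdot\mid h,a)$. By \eqref{eq:posterior}, $x=\sum_\mu w_\mu(h)p_\mu$ is the posterior mean of the $p_\mu$. Expanding gives
\[
\norm{p_\mu-r}^2=\norm{p_\mu-x}^2+2\langle p_\mu-x,x-r\rangle+\norm{x-r}^2 .
\]
Averaging over $\mu$ with weights $w_\mu(h)$ kills the cross term, because $\sum_\mu w_\mu(h)(p_\mu-x)=0$. This yields the conditional-mean identity
\[
\sum_\mu w_\mu(h)\norm{p_\mu-r}^2=\sum_\mu w_\mu(h)\norm{p_\mu-x}^2+\norm{x-r}^2 .
\]
Multiplying by $d_{\xi,t}^\pi(h,a)$ and summing over pairs then gives \eqref{eq:variance}. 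The last term becomes exactly $S_H^\pi(\xi,\rho)$, since $S_H^\pi(\xi,\rho)$ weights by $\xi$-occupancy. Pairs with zero mixture occupancy have zero occupancy under every $\mu$, so they contribute nothing to either side.

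The convergence bookkeeping is routine. All summands are nonnegative and each $S$ is at most $2H$, so exchanging the countable sum over $\mu$ with the finite sum over pairs is justified by Tonelli. The one point to check carefully is that the posterior identity is used only where $Z(h)>0$; this holds because positive $d_{\xi,t}^\pi$ forces $Z(h)>0$.

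For the bound, the average excess equals $S_H^\pi(\xi,\rho)$. By Lemma~\ref{lem:l2}, this is at most $(\ln2)\sum_{t,h,a} d_{\xi,t}^\pi(h,a)k_t(h,a)=(\ln2)\KL(P_\xi^\pi\Vert P_\rho^\pi)$, by the chain-rule expression in the proof of Proposition~\ref{prop:transfer}. That proposition then bounds this by $(\ln2)\kappa_\pi\eta$.

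I expect the main obstacle to be the first step: identifying the occupancy reweighting with the posterior at the history $h$, rather than with prior weights. This is exactly where causality and the common policy matter. With this identification, the rest is the standard variance decomposition.
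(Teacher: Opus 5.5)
Your proposal is correct and follows essentially the same argument as the paper's proof. You identify the occupancy reweighting with the posterior $w_\mu(h)$, use the pairwise Pythagorean identity with its vanishing cross term, and sum against $d_{\xi,t}^\pi$; the bound then comes from Lemma~\ref{lem:l2}, the chain rule, and Proposition~\ref{prop:transfer}, with your Tonelli and $Z(h)>0$ bookkeeping matching the paper's remarks.
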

\begin{proof}
At a pair $(h,a)$ with positive mixture occupancy,
\[
\frac{w_\mu d_{\mu,t}^\pi(h,a)}{d_{\xi,t}^\pi(h,a)}=w_\mu(h).
\]
Thus \eqref{eq:posterior} is the conditional mean of the task's probability vector. Expanding the squared norm gives
\begin{align*}
\sum_\mu w_\mu(h)\norm{\mu(\cdot\mid h,a)-\rho(\cdot\mid h,a)}^2
={}&\sum_\mu w_\mu(h)\norm{\mu(\cdot\mid h,a)-\xi(\cdot\mid h,a)}^2\\
&+\norm{\xi(\cdot\mid h,a)-\rho(\cdot\mid h,a)}^2.
\end{align*}
The cross term vanishes because $\sum_\mu w_\mu(h)(\mu-\xi)=0$. Multiply by $d_{\xi,t}^\pi(h,a)$ and sum. All squared errors are bounded, so countable task averaging causes no integrability problem. Lemma~\ref{lem:l2}, the chain rule, and Proposition~\ref{prop:transfer} give the excess bound.
\end{proof}
The identity compares two predictors under the same policy. It does not compare the losses on trajectories generated by two different model-optimal policies. Together with Theorem~\ref{thm:universal}, it shows that the gap between average and task-wise robustness identified by \citet{liteng2026} persists under interaction, where both the average and the task-wise excess scale with the policy coefficient.

\begin{proposition}[Prediction transfer with clipped occupancy]\label{prop:clipped}
For $c\ge0$, define the occupancy mass exceeding $c$ times the training mass by
\[
\Lambda_{\mu,\pi}(c)
=\sum_{t,h,a}\bigl[d_{\mu,t}^\pi(h,a)-c\,d_{\xi,t}^\beta(h,a)\bigr]_+,
\qquad [x]_+=\max\{x,0\}.
\]
Under \eqref{eq:trainkl}, without assuming finite $\Gamma$,
\begin{equation}
A_H^{\mu,\pi}(\xi,\rho)
\le\inf_{c\ge0}\left\{(\ln2)c\eta+2\Lambda_{\mu,\pi}(c)\right\}
\le2H.\label{eq:clippedprediction}
\end{equation}
This bound can replace $(\ln2)\Gamma\eta$ for the approximation component in the triangle-inequality proof of Theorem~\ref{thm:joint}.
\end{proposition}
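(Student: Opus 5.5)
Fix $c\ge0$ and split every pair $(t,h,a)$ according to whether the target occupancy $d_{\mu,t}^\pi(h,a)$ is at most $c\,d_{\xi,t}^\beta(h,a)$. I will write
\[
d_{\mu,t}^\pi(h,a)=\min\{d_{\mu,t}^\pi(h,a),\,c\,d_{\xi,t}^\beta(h,a)\}+\bigl[d_{\mu,t}^\pi(h,a)-c\,d_{\xi,t}^\beta(h,a)\bigr]_+,
\]
which is an exact identity for nonnegative numbers. Substituting it into the definition \eqref{eq:approxcomponent} of $A_H^{\mu,\pi}(\xi,\rho)$ gives two sums: a clipped part and an excess part.

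\emph{Clipped part.} Its weights are at most $c\,d_{\xi,t}^\beta(h,a)$, and they vanish wherever $d_{\xi,t}^\beta(h,a)=0$. So only pairs with positive training occupancy contribute. At those pairs, the local KL $k_t(h,a)$ is finite because the training KL is finite. Lemma~\ref{lem:l2} bounds each squared distance by $(\ln2)k_t(h,a)$. The clipped part is therefore at most
\[
(\ln2)\,c\sum_{t,h,a}d_{\xi,t}^\beta(h,a)k_t(h,a)=(\ln2)\,c\,\KL(P_\xi^\beta\Vert P_\rho^\beta)\le(\ln2)c\eta,
\]
using the chain rule exactly as in Proposition~\ref{prop:transfer}. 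The case $c=0$ is trivial.

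\emph{Excess part.} Here I use only the crude bound $\norm{p-q}^2\le\lVert p-q\rVert_1\cdot\max_e|p_e-q_e|\le2$ for probability vectors. This bounds the excess part by $2\Lambda_{\mu,\pi}(c)$. Notice that this step does not need finite KL at uncovered pairs, which is why no assumption on $\Gamma$ is needed. Since $c\ge0$ was arbitrary, taking the infimum gives the first inequality in \eqref{eq:clippedprediction}.

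\emph{Upper bound $2H$.} Set $c=0$. Then $\Lambda_{\mu,\pi}(0)=\sum_{t,h,a}d_{\mu,t}^\pi(h,a)=H$, since occupancies sum to one at each round. This makes the infimum at most $2H$.

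\emph{Use in Theorem~\ref{thm:joint}.} The triangle-inequality proof of Theorem~\ref{thm:joint} used only some upper bound on $A_H^{\mu,\pi}(\xi,\rho)$, so substituting this one there is immediate. The only delicate point in the whole argument is the support bookkeeping: every infinite local KL must sit in the excess part, where only the bounded squared-norm estimate is used. The decomposition above guarantees this.
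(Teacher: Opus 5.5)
Your proof is correct and follows the paper's argument: the paper uses the pointwise inequality $p\ell\le cq\ell+2[p-cq]_+$ with $\ell\le2$, which is your split into clipped and excess parts. It then bounds the clipped sum by $c\,S_H^\beta(\xi,\rho)\le(\ln2)c\eta$ and obtains the cap $2H$ from $\Lambda_{\mu,\pi}(0)=H$, exactly as you do, with your zero-training-occupancy bookkeeping making explicit what the paper leaves implicit.
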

\begin{proof}
Write $p=d_{\mu,t}^\pi(h,a)$, $q=d_{\xi,t}^\beta(h,a)$, and $\ell=\norm{\xi(\cdot\mid h,a)-\rho(\cdot\mid h,a)}^2\le2$. The inequality $p\ell\le cq\ell+2[p-cq]_+$, summed over pairs, gives
\[
A_H^{\mu,\pi}(\xi,\rho)
\le c\,S_H^\beta(\xi,\rho)+2\Lambda_{\mu,\pi}(c)
\le(\ln2)c\eta+2\Lambda_{\mu,\pi}(c).
\]
Take the infimum. At $c=0$, $\Lambda_{\mu,\pi}(0)=H$, which proves the cap. If $\Gamma<\infty$, taking $c=\Gamma$ recovers \eqref{eq:approxbound}.
\end{proof}
The residual counts occupancy across rounds, as appropriate for cumulative prediction loss. Theorem~\ref{thm:rarecoverage} instead charges the probability of entering a poorly covered branch once per episode.

\subsection{Simultaneous prior and policy shift}\label{sec:shift}
Let $w'$ be another prior on $\cM$, with mixture $\xi'$ defined by its own posterior weights. Suppose $w'_\mu\le Cw_\mu$ for all $\mu$, where $C\ge1$. Define
\[
\Gamma'=\max_{t,h,a}\frac{d_{\xi',t}^\pi(h,a)}{d_{\xi,t}^\beta(h,a)}.
\]
Marginal domination yields $\Gamma'\le C\kappa_\pi$.

\begin{theorem}[Simultaneous prior and policy shift]\label{thm:shift}
Under \eqref{eq:trainkl}, if $\Gamma'<\infty$, then
\begin{align}
&\E_{w'}S_H^\pi(\mu,\rho)-\E_{w'}S_H^\pi(\mu,\xi')
 =S_H^\pi(\xi',\rho)\notag\\
&\qquad\le(\ln2)\left(\sqrt{\KL(w'\Vert w)}+\sqrt{\Gamma'\eta}\right)^2.\label{eq:shiftsharp}
\end{align}
If also $\kappa_\pi<\infty$, the right-hand side is at most
\begin{equation}
(\ln2)\left(\sqrt{\log_2 C}+\sqrt{C\kappa_\pi\eta}\right)^2.\label{eq:shiftcoarse}
\end{equation}
\end{theorem}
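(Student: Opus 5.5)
The plan has three parts: the equality in \eqref{eq:shiftsharp}, a Euclidean triangle inequality that splits $S_H^\pi(\xi',\rho)$ into a prior-shift term and an approximation term, and then the coarse bound.

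\emph{The equality.} I would rerun the proof of Proposition~\ref{prop:variance} with the prior $w'$ and its own mixture $\xi'$ in place of $w$ and $\xi$. Under $w'$, the posterior weights $w'_\mu(h)$ average the task kernels to $\xi'(\cdot\mid h,a)$. The cross term therefore vanishes, and the decomposition gives
\[
\E_{w'}S_H^\pi(\mu,\rho)=\E_{w'}S_H^\pi(\mu,\xi')+S_H^\pi(\xi',\rho),
\]
with occupancies taken under $\xi'$. No assumption on $\xi$ enters this step.

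\emph{The triangle inequality.} I work in the weighted Euclidean space indexed by $(t,h,a,e)$ with weights $d_{\xi',t}^\pi(h,a)$, as in Theorem~\ref{thm:joint}. This gives
\[
\sqrt{S_H^\pi(\xi',\rho)}\le\sqrt{S_H^\pi(\xi',\xi)}+\sqrt{A'},
\qquad
A'=\sum_{t,h,a} d_{\xi',t}^\pi\norm{\xi-\rho}^2 .
\]
For $A'$, Lemma~\ref{lem:l2} bounds each squared norm by $(\ln2)k_t(h,a)$. Domination $d_{\xi',t}^\pi\le\Gamma' d_{\xi,t}^\beta$ and the chain rule then give $A'\le(\ln2)\Gamma'\eta$, exactly as in \eqref{eq:approxbound}. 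Finiteness of $\Gamma'$ ensures that the support conditions hold wherever the $\xi'$ occupancy is positive.

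The prior-shift term is the main step. By Lemma~\ref{lem:l2} and the common-policy chain rule,
\[
S_H^\pi(\xi',\xi)\le(\ln2)\KL(P_{\xi'}^\pi\Vert P_\xi^\pi).
\]
I then write $P_{\xi'}^\pi=\sum_\mu w'_\mu P_\mu^\pi$ and $P_\xi^\pi=\sum_\mu w_\mu P_\mu^\pi$ using \eqref{eq:causalmixture}. Both are marginals of the joint laws $w'_\mu P_\mu^\pi(\tau)$ and $w_\mu P_\mu^\pi(\tau)$ on (task, trajectory). By the data-processing inequality (joint convexity of KL), the trajectory KL is at most the joint KL, which equals $\KL(w'\Vert w)$ because the conditional laws coincide. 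The point to check is that this compares the causal mixtures correctly, so that the mixture law is used rather than per-step prior weights; the identity \eqref{eq:causalmixture} settles this. Combining the two bounds and squaring yields \eqref{eq:shiftsharp}.

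\emph{The coarse bound.} Since $w'_\mu\le Cw_\mu$, every log ratio in $\KL(w'\Vert w)=\sum_\mu w'_\mu\log_2(w'_\mu/w_\mu)$ is at most $\log_2 C$, so $\KL(w'\Vert w)\le\log_2 C$. Summing the same domination over $\mu$ gives $P_{\xi'}^\pi\le CP_\xi^\pi$ on prefix marginals, hence $d_{\xi',t}^\pi\le C d_{\xi,t}^\pi\le C\kappa_\pi d_{\xi,t}^\beta$ and $\Gamma'\le C\kappa_\pi$. Substituting both into \eqref{eq:shiftsharp} gives \eqref{eq:shiftcoarse}.
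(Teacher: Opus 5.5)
Your proposal is correct and follows the paper's proof essentially step for step. The paper uses the same four ingredients: Proposition~\ref{prop:variance} applied under $w'$; the weighted triangle inequality with $\xi'$-occupancies; the joint task--trajectory bound $\KL(P_{\xi'}^\pi\Vert P_\xi^\pi)\le\KL(w'\Vert w)$; and the two domination bounds $\KL(w'\Vert w)\le\log_2 C$ and $\Gamma'\le C\kappa_\pi$ for the coarse form.
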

\begin{proof}
Apply Proposition~\ref{prop:variance} to the prior $w'$. On the joint space of task labels and trajectories, the KL between $w'_\mu P_\mu^\pi(\tau)$ and $w_\mu P_\mu^\pi(\tau)$ equals $\KL(w'\Vert w)$. Marginalizing the label and applying the log-sum inequality give
\[
\KL(P_{\xi'}^\pi\Vert P_\xi^\pi)\le\KL(w'\Vert w)\le\log_2 C.
\]
The first inequality also follows directly from the KL chain rule for the latent task. Hence $S_H^\pi(\xi',\xi)\le(\ln2)\KL(w'\Vert w)$. Meanwhile,
\[
\sum_{t,h,a}d_{\xi',t}^\pi(h,a)\norm{\xi(\cdot\mid h,a)-\rho(\cdot\mid h,a)}^2
\le(\ln2)\Gamma'\eta.
\]
The weighted Euclidean triangle inequality, now with occupancy under $\xi'$, proves \eqref{eq:shiftsharp}. Substituting the two domination bounds gives \eqref{eq:shiftcoarse}.
\end{proof}
The baseline is the test-prior mixture $\xi'$. Even when $\rho=\xi$, changing the prior can leave excess prediction error, and the term $\KL(w'\Vert w)$ records that mismatch. With one available action, \eqref{eq:shiftcoarse} recovers the finite-horizon form of \citet[Theorem~5]{liteng2026}.

\subsection{Episode-level allocation of training data}\label{sec:allocation}
Suppose behavior first samples $J$ with probabilities $\gamma_j$, independently of the task, and follows $\pi_J$ for the whole episode. This can be represented by one history-dependent behavior policy common to all environments. For a history $h=h_{<t}$, let
\[
Q_j(h)=\prod_{s<t}\pi_j(a_s\mid h_{<s}).
\]
Whenever the denominator is positive, its behavioral representation is
\begin{equation}
\beta(a\mid h)=\frac{\sum_j\gamma_j Q_j(h)\pi_j(a\mid h)}{\sum_j\gamma_j Q_j(h)}.\label{eq:episodicbeta}
\end{equation}
At other histories, choose any normalized action distribution. This formula follows by cancelling the common environment likelihood when conditioning on the observed history, and it generally differs from resampling $J$ independently at each step. The behavior policy $\beta_\epsilon$ of Theorem~\ref{thm:universal} is an instance.

\begin{proposition}[Episode-level policy allocation]\label{prop:allocation}
For the policy in \eqref{eq:episodicbeta} and any causal model $M$,
\[
P_M^\beta=\sum_j\gamma_jP_M^{\pi_j}.
\]
Consequently, for every $j$ with $\gamma_j>0$,
\[
\kappa_{\pi_j}\le\gamma_j^{-1},\qquad
\Gamma(\mu,\pi_j;\beta)\le(w_\mu\gamma_j)^{-1}.
\]
\end{proposition}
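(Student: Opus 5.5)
The plan is to verify the trajectory identity directly from the product formula \eqref{eq:traj} and then read off both coefficient bounds from occupancy domination. For a full trajectory $\tau$, write $Q^\beta(h_{<H+1})=\prod_{t\le H}\beta(a_t\mid h_{<t})$ for the behavioral action factor. I will show by induction on $t$ that
\[
\prod_{s<t}\beta(a_s\mid h_{<s})=\sum_j\gamma_jQ_j(h_{<t})
\]
whenever the right-hand side is positive. At $t=1$ both sides equal $\sum_j\gamma_j=1$. For the inductive step, multiply by $\beta(a_t\mid h_{<t})$ from \eqref{eq:episodicbeta}. The denominator $\sum_j\gamma_jQ_j(h_{<t})$ cancels and leaves $\sum_j\gamma_jQ_j(h_{<t})\pi_j(a_t\mid h_{<t})=\sum_j\gamma_jQ_j(h_{<t+1})$.

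If the denominator vanishes at some $t$, then every $Q_j(h_{<t})$ with $\gamma_j>0$ is zero. In that case the left product is already zero by the inductive hypothesis, since the identity held at the first time the sum vanished. The arbitrary choice of $\beta$ at such histories is therefore irrelevant, and the identity continues to hold with both sides zero. This zero-denominator bookkeeping is the only delicate point, and I will handle it by induction on the first vanishing time.

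Multiplying the action identity by the common environment factor $\prod_t M(e_t\mid h_{<t},a_t)$ gives $P_M^\beta(\tau)=\sum_j\gamma_jP_M^{\pi_j}(\tau)$. The same computation applied to prefixes, or marginalization of the full identity, gives the occupancy version
\[
d_{M,t}^\beta(h,a)=\sum_j\gamma_j d_{M,t}^{\pi_j}(h,a)\ge\gamma_j d_{M,t}^{\pi_j}(h,a).
\]

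For the coverage bounds, take $M=\xi$. Dividing the occupancy inequality gives $d_{\xi,t}^{\pi_j}/d_{\xi,t}^\beta\le\gamma_j^{-1}$ wherever the denominator is positive. Where the denominator is zero, domination forces the numerator to be zero as well, so the ratio is $0$ under the $0/0=0$ convention. Taking the maximum yields $\kappa_{\pi_j}\le\gamma_j^{-1}$. Finally, \eqref{eq:coarsegamma} gives $\Gamma(\mu,\pi_j;\beta)\le\kappa_{\pi_j}/w_\mu\le(w_\mu\gamma_j)^{-1}$. Alternatively, one can combine $d_{\xi}^\beta\ge\gamma_jd_\xi^{\pi_j}\ge\gamma_jw_\mu d_\mu^{\pi_j}$ from \eqref{eq:causalmixture} applied to prefix marginals.
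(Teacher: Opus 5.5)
Your proof is correct and follows the paper's route: you telescope the behavioral action factors from \eqref{eq:episodicbeta} into $\sum_j\gamma_j\prod_t\pi_j(a_t\mid h_{<t})$, multiply by the common environment likelihood, marginalize to history-action occupancies, and then use domination, via \eqref{eq:coarsegamma} or mixture domination, to bound $\kappa_{\pi_j}$ and $\Gamma$. Your induction also spells out the zero-denominator bookkeeping that the paper's one-line telescoping argument leaves implicit.
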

\begin{proof}
Multiplying \eqref{eq:episodicbeta} along a trajectory telescopes the denominator and numerator, leaving $\sum_j\gamma_j\prod_t\pi_j(a_t\mid h_{<t})$. Multiplying by the common environment likelihood gives the trajectory identity. Marginalize to each history-action pair and use mixture domination over tasks.
\end{proof}

To make task-family allocation precise, let $q$ and $v$ be priors and set $w=\lambda q+(1-\lambda)v$ for $\lambda\in(0,1]$, discarding any zero-weight tasks so that the support convention of Section~\ref{sec:setup} holds. For an individual task with $q_\mu>0$,
\begin{equation}
\Gamma(\mu,\pi_j;\beta)\le\frac1{\lambda q_\mu\gamma_j}.\label{eq:familyindividual}
\end{equation}
Family mass $\lambda$ alone therefore does not guarantee an individual-task coefficient $1/(\lambda\gamma_j)$. That coefficient instead controls the family mixture. With $\xi_q$ denoting the mixture under $q$,
\[
\max_{t,h,a}\frac{d_{\xi_q,t}^{\pi_j}(h,a)}{d_{\xi,t}^\beta(h,a)}\le\frac1{\lambda\gamma_j},
\qquad \KL(q\Vert w)\le\log_2\frac1\lambda,
\]
so Theorem~\ref{thm:shift} bounds the family-average excess relative to $\xi_q$ by
\[
(\ln2)\left(\sqrt{\log_2\frac1\lambda}+\sqrt{\frac{\eta}{\lambda\gamma_j}}\right)^2.
\]
These statements distinguish task allocation from interaction allocation. Increasing the frequency of a task family raises $\lambda$, while collecting episodes with a particular rollout policy raises its $\gamma_j$, and both enter the transfer guarantee. Adding more task descriptions alone need not expose the responses to alternative action sequences. Adaptive generation of training tasks, as in \citet{cowsik2026}, changes the first allocation, while collecting responses to additional action sequences changes the second. The coefficients concern coverage under the environment law used in the assumption, so model-generated responses require a separate argument that they represent that law. Changing the data distribution can also change the attainable approximation error, so the inequalities do not imply that reallocating data monotonically improves a trained predictor.

Pessimistic planning is a complementary direction. A lower confidence objective can compare against a covered policy without requiring comparable coverage of every candidate policy, provided the confidence set and model assumptions are valid \citep{uehara2022}. Theorem~\ref{thm:rarecoverage} instead bounds a plug-in planner using the coverage-failure probabilities of both policies, and it does not supply the calibrated confidence set that pessimism needs. A complexity-aware pessimistic algorithm for universal prediction requires additional work.

\subsection{An explicit finite-class instance}
\begin{proposition}[Finite-class lower bound for joint rarity]\label{prop:jointlower}
For every $w\in(0,1)$, $\epsilon\in(0,1]$, and $\delta\in(0,1/4]$, there is a two-stage, two-environment instance with a target task of mass $w$ such that
\[
\kappa_\pi=\epsilon^{-1},\qquad \Gamma=(w\epsilon)^{-1},\qquad
\eta=w\epsilon\left[-\frac12\log_2(1-4\delta^2)\right],
\]
where $\eta$ is the actual training KL, and
\begin{equation}
S_2^\pi(\mu,\rho)-S_2^\pi(\mu,\xi)
=A_2^{\mu,\pi}(\xi,\rho)=2\delta^2
\ge\frac{3\ln2}{4}\Gamma\eta.\label{eq:jointlower}
\end{equation}
\end{proposition}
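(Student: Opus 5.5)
The instance is Example~\ref{ex:laterevelation} with its uninformative first round removed, so that task revelation and the final prediction form the two stages. Take $H=2$, actions $\{r,s\}$, binary percepts, and environments $\mu=\mu_1,\mu_0$ with weights $w,1-w$.

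\textbf{Construction.} The deployment policy $\pi$ takes $r$ at round one and $s$ at round two. The behavior policy $\beta$ takes $r$ with probability $\epsilon$ and $s$ otherwise at round one, and takes $s$ at round two. At round one, after either action, $\mu_1$ emits $1$ and $\mu_0$ emits $0$. At round two, both environments emit a fair bit. Let $\rho$ equal $\xi$ everywhere except at the pair $(h,s)$ with $h=(r,1)$, where $\rho$ predicts $\Ber(1/2+\delta)$.

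\textbf{Coefficients.} I compute occupancies directly.
\begin{itemize}
\item At round one, $d^\pi_\xi(\varnothing,r)=1$ and $d^\beta_\xi(\varnothing,r)=\epsilon$, so the ratio is $1/\epsilon$.
\item At round two, $d^\pi_\xi((r,1),s)=w$ and $d^\beta_\xi((r,1),s)=w\epsilon$, again a ratio of $1/\epsilon$.
\item At $((r,0),s)$ the ratio is also $1/\epsilon$, and every other pair has zero deployment occupancy.
\end{itemize}
Hence $\kappa_\pi=1/\epsilon$.

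For $\Gamma$, the target occupancy $d^\pi_{\mu,2}((r,1),s)$ equals $1$, against training mass $w\epsilon$. This gives $\Gamma\ge 1/(w\epsilon)$, and the coarse bound \eqref{eq:coarsegamma} gives $\Gamma\le 1/(w\epsilon)$. Equivalently, Proposition~\ref{prop:evidence} applies with posterior $w_\mu(h)=1$, so Corollary~\ref{cor:extremes}(ii) yields $\Gamma=(w\epsilon)^{-1}$.

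\textbf{Training KL.} The only differing conditional is at $(h,s)$, where $\xi$ is exactly $\Ber(1/2)$ because both tasks emit fair bits. By the chain rule,
\[
\eta=w\epsilon\,\KL\bigl(\Ber(1/2)\Vert\Ber(1/2+\delta)\bigr).
\]
This KL equals $-\tfrac12\log_2[(1+2\delta)(1-2\delta)]=-\tfrac12\log_2(1-4\delta^2)$, which gives the stated formula for $\eta$.

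\textbf{Excess error.} The models $\xi$ and $\rho$ agree at every pair other than $(h,s)$. At round one, $\mu$'s squared error therefore contributes identically to $S_2^\pi(\mu,\rho)$ and $S_2^\pi(\mu,\xi)$, and those contributions cancel. At $(h,s)$, which $\mu$ reaches with probability one, $\xi$ coincides with $\mu$. The excess is therefore
\[
\norm{\Ber(1/2)-\Ber(1/2+\delta)}^2=2\delta^2 .
\]
The same sum defines $A_2^{\mu,\pi}(\xi,\rho)$, so the two quantities coincide.

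\textbf{The inequality.} This is the only step needing care. Set $x=4\delta^2\in(0,1/4]$. The claim \eqref{eq:jointlower} becomes
\[
\frac{x}{2}\ge\frac{3\ln2}{4}\cdot\frac{1}{w\epsilon}\cdot w\epsilon\cdot\frac{-\ln(1-x)}{2\ln2}=\frac38\bigl(-\ln(1-x)\bigr),
\]
that is, $-\ln(1-x)\le\tfrac43x$. This follows from $-\ln(1-x)\le x/(1-x)$ together with $1/(1-x)\le 4/3$ for $x\le1/4$. The restriction $\delta\le1/4$ is exactly what makes the constant $3/4$ work.
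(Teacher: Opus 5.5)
Your proof is correct and follows the paper's argument. Both perturb a single pair to predict $\Ber(1/2+\delta)$ where the training occupancy is $w\epsilon$ and the target occupancy is one, compute $\eta$ by the chain rule, and finish with $-\ln(1-x)\le x/(1-x)\le\frac43x$ for $x=4\delta^2\le1/4$. The paper reveals the task at round one under the common action and places the rare action $r$ at round two. You put the rare action first, as in Example~\ref{ex:laterevelation}, and this ordering leaves $\kappa_\pi$, $\Gamma$, $\eta$, and the excess unchanged.
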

\begin{proof}
Let $\mathcal A=\{s,r\}$ and $\mathcal E=\{0,1\}$. Both policies first choose $s$. The target environment $\mu_1$ outputs $e_1=1$ deterministically, and $\mu_0$ outputs $e_1=0$. Give $\mu_1$ prior mass $w$. At the second step, both environments output a fair bit under either action. The behavior policy chooses $r$ with probability $\epsilon$ at that step, independently of $e_1$, whereas $\pi$ always chooses $r$.

Let $\rho$ agree with $\xi$ except that after $e_1=1$ and action $r$ it predicts $\Ber(1/2+\delta)$. The exceptional pair has training occupancy $w\epsilon$ and target deployment occupancy one. All other reached pairs have no larger occupancy ratios, which gives the stated coefficients. The only nonzero local approximation KL is
\[
\KL(\Ber(1/2)\Vert\Ber(1/2+\delta))=-\frac12\log_2(1-4\delta^2).
\]
The first-step predictor is unchanged, and the target and mixture conditionals agree at the second step, so the task-wise excess is exactly $2\delta^2$. Finally, $-\ln(1-x)\le x/(1-x)$ for $0\le x<1$ and $4\delta^2\le1/4$, so
\[
\Gamma\eta\le\frac{8\delta^2}{3\ln2}.
\]
This proves \eqref{eq:jointlower}.
\end{proof}
The construction is the early-revelation case of Corollary~\ref{cor:extremes} with explicit constants at a fixed horizon of two. Theorem~\ref{thm:universal} establishes the same dependence, with constants depending on the reference machine, inside one fixed universal mixture.

\section{An infinite-horizon computability obstruction}\label{app:computability}
The following elementary diagonal argument gives a population-KL obstruction for one fixed computable predictor. It concerns a different quantity from the arithmetical-hierarchy classifications in \citet{leikehutter2018}.
\begin{proposition}[No uniformly bounded universal KL for a fixed computable predictor]\label{prop:diagonal}
In the passive binary special case, let $\rho$ have conditional probabilities uniformly computable from the finite history. Let $\xi$ be a normalized mixture assigning positive weight to every computable deterministic binary sequence. Then there is a constant $w>0$ such that for every $H$,
\[
\KL(P_\xi^H\Vert P_\rho^H)\ge wH\log_2(8/5)-1.
\]
In particular, these divergences cannot remain bounded as $H\to\infty$.
\end{proposition}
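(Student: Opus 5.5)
The plan is a diagonal argument: build a single computable deterministic sequence on which $\rho$ is persistently wrong, then extract a KL lower bound from that one atom by data processing onto a two-cell partition.

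\emph{Step 1: the adversarial sequence.} Define $x=x_1x_2\cdots$ recursively. Given $x_{<t}$, compute a rational approximation $\tilde p_t$ of $\rho(1\mid x_{<t})$ to within $1/8$. This is possible because $\rho$'s conditionals are uniformly computable from the finite history. Set $x_t=1$ if $\tilde p_t\le1/2$ and $x_t=0$ otherwise.

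In the first case $\rho(1\mid x_{<t})\le1/2+1/8=5/8$. In the second case $\rho(1\mid x_{<t})>1/2-1/8=3/8$, so $\rho(0\mid x_{<t})<5/8$. Hence $\rho(x_t\mid x_{<t})\le5/8$ for every $t$. We only need approximations rather than exact comparisons with $1/2$, since exact comparison of computable reals is undecidable; this is the point where the constant $8/5$ comes from.

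The procedure is an algorithm, so $x$ is a computable deterministic binary sequence. The corresponding environment $\mu_x$ therefore has some weight $w:=w_{\mu_x}>0$ in $\xi$. This $w$ depends only on $\rho$ and $\xi$, not on $H$.

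\emph{Step 2: the two probabilities of the atom.} Fix $H$ and let $A$ be the event $e_{1:H}=x_{1:H}$. By mixture domination \eqref{eq:causalmixture}, $P_\xi^H(A)\ge w\,\mu_x(A)=w$. By the chain rule and Step 1,
\[
P_\rho^H(A)=\prod_{t\le H}\rho(x_t\mid x_{<t})\le(5/8)^H.
\]

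\emph{Step 3: KL lower bound.} Apply the data-processing inequality to the indicator of $A$. Write $p=P_\xi^H(A)$ and $q=P_\rho^H(A)$. Then
\[
\KL(P_\xi^H\Vert P_\rho^H)\ge p\log_2\frac pq+(1-p)\log_2\frac{1-p}{1-q}.
\]
Since $1-q\le1$, the right side is at least
\[
p\log_2\frac1q+p\log_2p+(1-p)\log_2(1-p)=p\log_2\frac1q-h_2(p),
\]
where $h_2$ is the binary entropy in bits, so $h_2(p)\le1$. Using $p\ge w$ and $\log_2(1/q)\ge H\log_2(8/5)$ gives the claimed $wH\log_2(8/5)-1$. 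The degenerate cases $q=0$ (infinite KL) and $p=1$ are covered by the same inequality.

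The main obstacle is Step 1: arranging that the diagonal sequence is genuinely computable despite only approximate access to $\rho$. The $1/8$-approximation trick resolves it. The rest is routine.
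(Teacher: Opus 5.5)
Your proposal is correct and follows the paper's proof essentially step for step. The paper uses the same $1/8$-approximation diagonal sequence with $\rho(x_t\mid x_{<t})\le 5/8$, mixture domination to get $P_\xi^H(A)\ge w$, and data processing onto the indicator of the atom with the bound $h_2(p)\le 1$.
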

\begin{proof}
At a prefix $x_{<t}$, compute a rational approximation $q_t$ to $\rho(1\mid x_{<t})$ with error at most $1/8$. Set $x_t=1$ if $q_t\le1/2$ and $x_t=0$ otherwise. In both cases $\rho(x_t\mid x_{<t})\le5/8$. This constructs a computable deterministic sequence, whose environment has some mixture weight $w>0$.

For the event $B_H=\{x_{1:H}\}$, write $p=P_\xi^H(B_H)\ge w$ and $q=P_\rho^H(B_H)\le(5/8)^H$. If $q=0$, the claimed divergence is infinite. Otherwise, data processing to the indicator of $B_H$ yields
\begin{align*}
\KL(P_\xi^H\Vert P_\rho^H)
&\ge p\log_2\frac1q+(1-p)\log_2\frac1{1-q}-h_2(p)\\
&\ge wH\log_2(8/5)-1,
\end{align*}
where $h_2(p)=-p\log_2p-(1-p)\log_2(1-p)\le1$. The construction uses approximate probabilities, so it never requires deciding equality of computable real numbers.
\end{proof}
The obstruction already holds without interaction. It does not rule out horizon-dependent predictors, finite-horizon guarantees, or restricted computable environment classes.

\end{document}